\newtheorem{theorem}{Theorem}
\newtheorem{corollary}{Corollary}
\newtheorem{lemma}{Lemma}
\newtheorem{definition}{Definition}
\newtheorem{claim}{Claim}
\DeclareMathOperator{\E}{\mathbb{E}}
\DeclareMathOperator{\Nmax}{{N_{\max}}}
\DeclareMathOperator{\cT}{\mathcal{T}}
\DeclareMathOperator{\cO}{\mathcal{O}}
\DeclareMathOperator{\cX}{\mathcal{X}}
\DeclareMathOperator{\cA}{\mathcal{A}}
\DeclareMathOperator{\Ber}{\mathcal{B}}
\DeclareMathOperator{\Reg}{\operatorname{Reg}}
\DeclareMathOperator{\Regimp}{\Reg_{\operatorname{Imp}}}
\DeclareMathOperator*{\kl}{\operatorname{kl}}
\newcommand{\commentout}[1]{}
\newcommand{\bbN}{\mathbb{N}}
\newcommand{\bbP}{\mathbb{P}}
\newcommand{\bbI}{\mathbb{I}}
\newcommand{\bbQ}{\mathbb{Q}}
\newcommand{\bbV}{\mathbb{V}}
\newcommand{\cE}{\mathcal{E}}
\newcommand{\cB}{\mathcal{B}}
\newcommand{\cN}{\mathcal{N}}
\newcommand{\C}{\mathfrak{C}}
\newcommand{\ip}[1]{\langle #1 \rangle}
\newcommand{\TV}[1]{\| #1 \|_{TV}}
\newcommand{\TVQP}[1]{\TV{\bbQ_{#1}-\bbP_{ #1} }}
\newcommand{\PolyLog}{\operatorname{PolyLog}}
\newif\ifsup
\title{The Pareto Frontier of model selection \\for general Contextual Bandits}
\author{%
  Teodor Marinov\thanks{Author was at Johns Hopkins University during part of this work.}\\
  Google Research\\
  \texttt{tvmarinov@google.com} \\
  \And
  Julian Zimmert\\
  Google Research\\
  \texttt{zimmert@google.com}\\
}
\begin{document}

\maketitle

\begin{abstract}
  Recent progress in model selection raises the question of the fundamental limits of these techniques. Under specific scrutiny has been model selection for general contextual bandits with nested policy classes, resulting in a COLT2020 open problem. It asks whether it is possible to obtain simultaneously the optimal single algorithm guarantees over all policies in a nested sequence of policy classes, or if otherwise this is possible for a trade-off  $\alpha\in[\frac{1}{2},1)$ between complexity term and time: $\ln(|\Pi_m|)^{1-\alpha}T^\alpha$. 
We give a disappointing answer to this question. Even in the purely stochastic regime, the desired results are unobtainable. We present a Pareto frontier of up to logarithmic factors matching upper and lower bounds, thereby proving that an increase in the complexity term $\ln(|\Pi_m|)$ independent of $T$ is unavoidable for general policy classes.
As a side result, we also resolve a COLT2016 open problem concerning second-order bounds in full-information games.
\end{abstract}

\section{Introduction}
\label{sec:intro}
Contextual multi-armed bandits are a fundamental problem in online learning~\citep{auer2002nonstochastic,langford2007epoch,chu2011contextual,abbasi2011improved}. The contextual bandit problem proceeds as a repeated game between a learner and an adversary. At every round of the game the adversary prepares a pair of a context and a loss over an action space, the learner observes the context and selects an action from the action space and then observes only the loss of the selected action. The goal of the learner is to minimize their cumulative loss. The performance measure, known as \emph{regret}, is the difference between the learner's cumulative loss and the smallest loss of a fixed policy, belonging to an apriori determined policy class, mapping contexts to actions.
Given a single contextual bandit instance with finite sized policy class, the well-known Exp4 algorithm \citep{auer2002nonstochastic} achieves the optimal regret
bound of $\cO(\sqrt{KT\ln(|\Pi|)}$.
Regret guarantees degrade with the complexity of the policy class, therefore a a learner might want to leverage ``guesses'' about the optimal policy.
Given policy classes $\Pi_1\subset \dots\subset \Pi_M$, a learner would ideally suffer regret scaling only with the complexity of $\Pi_{m^*}$, the smallest policy class containing the optimal policy $\pi^*$.
While these kind of results are obtainable in full-information games, in which the learner gets to observe the loss for all actions, they are impossible for multi-armed bandits~\citep{lattimore2015pareto}. 
In some aspects, contextual bandits are an intermediate setting between full-information and multi-armed bandits and it is unknown if model selection is possible.
\citet{FKL20} stated model selection in contextual bandits as a relevant open problem in COLT2020.
Any positive result for model selection in contextual bandits would imply a general way to treat multi-armed bandits with a switching baseline. 
Furthermore any negative result is conjectured to implicate negative results on another unresolved open problem on second order bounds for full-information games \citep{F16open}.

In this paper, we give a fairly complete answer to the questions above.
\begin{enumerate}[label=P.\arabic*]
    \item We provide a Pareto frontier of upper bounds for model selection in contextual bandits with finite sized policy classes.
    \item We present matching lower bounds that shows that our upper bounds are tight, thereby resolve the motivating open problems \citep{FKL20}.
    \item We present a novel impossibility result for adapting to the number of switch points under adaptive adversaries~\citep{besbes2014stochastic}.
    \item We negatively resolve an open problem on second order bounds for full-information \citep{F16open}. 
\end{enumerate}

\paragraph{Related work.}
A problem closely related to contextual bandits with finite policy classes are linear contextual bandits. 
Model selection in linear contextual bandit problems has recently received significant attention, however none of these resuls transfer to the finite policy case. In the linear bandits problem the $m$-th policy class is a subset of $\mathbb{R}^{d_m}$ and the losses $\ell_{t,\pi(x)},\pi \in \Pi_m$ are linear, that is $\ell_{t,\pi(x)} = \langle \theta_m, \phi_m(x,\pi(x))\rangle + \xi$. Here $\phi_m : \mathcal{X}\times\mathcal{A}\rightarrow \mathbb{R}^{d_m}$ is a feature embedding mapping from context-action pairs into $\mathbb{R}^{d_m}$, $\xi$ is mean-zero sub-Gaussian noise with variance proxy equal to one and $\theta_m \in \mathbb{R}^{d_m}$ is an unknown parameter.

\cite{foster2019model} assume the contexts are also drawn from an unknown distribution $x\sim\mathcal{D}$ and propose an algorithm which does not incur more than $\smash{\tilde O(\frac{1}{\gamma^3}(i^*T)^{2/3}(Md_{i^*})^{1/3})}$, where $\gamma^3$ is the smallest eigenvalue of the covariance matrix of feature embeddings $\Sigma = \mathbb{E}_{x\sim\mathcal{D}}\left[\frac{1}{M}\sum_{a\in\mathcal{A}}\phi_M(x,a)\phi_M(x,a)^\top\right]$. \cite{pacchiano2020model} propose a different approach based on the corralling algorithm of \cite{agarwal2016corralling} which enjoys a $\tilde O(d_{i^*}\sqrt{T})$ regret bound for finite action sets and  $\tilde O(d_{i^*}^2\sqrt{T})$ bound for arbitrary action sets $\mathcal{A}$. Later, \cite{pacchiano2020regret} design an algorithm which enjoys a gap-dependent guarantee under the assumption that all of the miss-specified models have regret $R_i(t) \geq \Delta t, \forall t\in [T]$. Under such an assumption, the authors recover a regret bounds of the order $\tilde O(d_{i^*}\sqrt{T} + d_{i^*}^4/\Delta)$ for arbitrary action sets. \cite{cutkosky2020upper} also manage to recover the $O(d_{i^*}\sqrt{T})$ and $O(d_{i^*}^2\sqrt{T})$ bounds for the model selection problems through their corralling algorithm. \cite{ghosh2021problem} propose an algorithm which enjoys $\tilde O\left(\frac{d_M^2}{\gamma^{4.65}} + \sqrt{d_{m^*}T}\right)$ in the finite arm setting, where $\gamma = \min\{|\theta_{m^*,i}| : |\theta_{m^*,i}| > 0\}$ is the smallest, in absolute value, entry of $\theta_{m^*}$. Their algorithm also enjoys a similar guarantee for arbitrary action sets with $\sqrt{d_{m^*}T}$ replaced by $d_{m^*}\sqrt{T}$. \cite{zhu2021pareto} show that it is impossible to achieve the desired regret guarantees of $\sqrt{d_{m^*}T}$ without additional assumptions by showing a result similar to the one of \cite{lattimore2015pareto}. The work of \cite{lattimore2015pareto} states that in the stochastic multi-armed bandit problem it is impossible to achieve $\sqrt{T}$ regret to a fixed arm, without suffering at least $K\sqrt{T}$ regret to a different arm. 

\cite{chatterji2020osom} study the problem of selecting between an algorithm for the linear contextual bandit problem and the simple stochastic multi-armed bandit problem, that is they aim to achieve simultaneously a regret guarantee which is instance-dependent optimal for the stochastic multi-armed bandit problem and optimal for the finite arm stochastic linear bandit problem. The proposed results only hold under additional assumptions. More generally, the study of the corralling problem, in which we are presented with multiple bandit algorithms and would like to perform as well as the best one, was initiated by \cite{agarwal2016corralling}. Other works which fall into the corralling framework are that of \cite{FGMZ20} who study the miss-specified linear contextual bandit problem, that is the observed losses are linear up to some unknown $\epsilon$ miss-specification, and the work of \cite{arora2021corralling} who study the corralling problem for multi-armed stochastic bandit algorithms.

Our work also shows an impossibility result for the stochastic bandit problem with non-stationary rewards. \cite{auer2002using} first investigates the problem under the assumption that there are $L$ distributional changes throughout the game and gives an algorithm with a $\tilde O(\sqrt{KLT})$ \emph{dynamic regret}\footnote{In dynamic regret the comparator is the best action for the current distribution.} bound, under the assumption that $L$ is known. \cite{auer2019adaptively} achieves similar regret guarantees without assuming that the number if switches (or changes) of the distribution is known. A different measurement of switches is the total variation of changes in distribution $V_T = \sum_{t=2}^T \|\mathbb{E}[\ell_t] - \mathbb{E}[\ell_{t-1}] \|_{\infty}$. Multiple works give dynamic regret bounds of the order $\tilde O(V_T^{1/3}T^{2/3})$ (hiding dependence on the size of the policy class) when $V_T$ is known, including for extensions of the multi-armed bandit problem like contextual bandits and linear contextual bandits~\citep{besbes2014stochastic,luo2018efficient,besbes2015non,wei2017tracking}. \cite{cheung2019learning,zhao2020simple} further show algorithms which enjoy a parameter free regret bound of the order $\tilde O(V_T^{1/4}T^{3/4})$ (hiding dependence on dimensionality) for the linear bandits problem.
The lower bound in Table~\ref{table:1} might seem to contradict such results. In Section~\ref{sec:s-switch_comment} we carefully explain why this is not the case.

Finally, our lower bounds apply to the problem of devising an algorithm which simultaneously enjoys a second order bound over any fraction of experts. \cite{cesa2007improved} first investigate the problem of second order bounds for the experts problem, in which the proposed algorithm maintains a distribution $p_t$ over the set of $K$ experts, during every round of the game. The experts are assumed to have stochastic losses $\ell_t$ and the work shows an algorithm with $\tilde O(\sqrt{\sum_{t=1}^T \mathbb{V}_{i\sim p_t}[\ell_{t,i}]\log{K}})$ regret guarantee. \cite{chaudhuri2009parameter,chernov2010prediction,luo2015achieving,koolen2015second} study a different experts problem in which the comparator class for the regret changes from the best expert in hindsight to the uniform distribution over the best $\lfloor\epsilon K\rfloor$ experts for an arbitrary positive $\epsilon$. The above works propose algorithms which achieve a $\tilde O(\sqrt{T\log(1/\epsilon)})$ regret bound for all $\epsilon$ simultaneously. \cite{F16open} asks if there exists an algorithm which enjoys both guarantees at the same time, that is, does there exist an algorithm with regret bound $\tilde O(\sqrt{\sum_{t=1}^T \mathbb{V}_{i\sim p_t}[\ell_{t,i}]\log(1/\epsilon)})$ which holds simultaneously for all positive $\epsilon$.

\begin{table}
\centering
\begin{tabular}{p{1.72cm}|c|c }
  General CB & Upper bound & Lower bound  \\ [0.5ex] 
 \hline
 {\bf adaptive adversary} & $\cO(\max\{\C,\frac{\ln|\Pi_m|}{\C}\}\sqrt{MKT})$ & $\Omega(\max\{\C,\frac{\ln|\Pi_m|}{\C}\}\sqrt{\frac{KT}{\ln(K)}})$  \\ 
 \hline
 {\bf oblivious adversary / stochastic } & $\cO(\max\{\C,\frac{\ln|\Pi_m|}{\C}\}\sqrt{MKT})$ &   $\Omega(\max\{\C,\frac{\ln|\Pi_m|}{\C}\}\sqrt{T})$  \\[4em]
  S-switch & Upper bound & Lower bound  \\ [0.5ex] 
 \hline
 {\bf adaptive adversary} & $\tilde\cO(\max\{\C,\frac{S}{\C}\}\sqrt{KT})$ & $\Omega(\max\{\C,\frac{S}{\C}\}\sqrt{KT})$  \\ 
 \hline
 oblivious adversary & $\tilde\cO(\sqrt{SKT}+T^{3/4})$ &   $\Omega(\sqrt{SKT})$  \\
 \hline
 stochastic & $\tilde\cO(\sqrt{SKT})$ & $\Omega(\sqrt{SKT})$
\end{tabular}
\caption{Overview of our results. Our novel contributions are in bold; lower bounds only hold if the expressions are not exceeding $\Theta(T)$. The stochastic/oblivious lower bounds hold only for proper algorithms.}
\label{table:1}
\end{table}
\paragraph{Notation.}
For any $N\in\bbN$, $[N]$ denotes the set $\{1,\dots,N\}$. $\tilde O$ notation hides poly-logarithmic factors in the horizon $T$ and the number of arms $K$ but not in the size of the policy classes $|\Pi_m|$.

\section{Problem setting}
We consider the contextual bandit problem with general policy classes of finite size. There are $K$ arms and nested policy classes $(\Pi_m)_{m=1}^M$, where a policy $\pi\in\Pi_m, \pi:\cX\rightarrow[K]$ is a mapping from an arbitrary context space $\cX$ into the set of $K$ arms.
The game is played for $T$ rounds and at any time $t$, the agent observes a context $x_t\in\cX$, selects arm $A_t\in[K]$ and observes the loss $\ell_{t,A_t}$ from an otherwise unobserved loss vector $\ell_t\in[K]$.
We measure an algorithm's performance in terms of pseudo-regret, which is the expected cumulative regret of the player against following a fixed policy in hindsight
\begin{align*}
    \Reg(T,\Pi) = \max_{\pi\in\Pi}\E\left[\sum_{t=1}^T\ell_{t,A_t}-\ell_{t,\pi(x_t)}\right]\,.
\end{align*}

\paragraph{Environments.}
We distinguish between \emph{stochastic} environments and \emph{oblivious} or \emph{adaptive} adversaries.
In stochastic environments, there are unknown distribution $P_{\cX}, Q$ such that $x_t\sim P_{\cX}$ and $\ell_t\sim Q(\cdot | x_t)$ are i.i.d.\ samples.
In the adversarial regime, the distributions can change over time, i.e. $x_t\sim P_{\cX,t}, \ell_t\sim Q_t(\cdot|x_t)$.
When the choices are fixed at the beginning of the game, the adversary is called oblivious, while an adaptive adversary can chose $P_{\cX,t}, Q_t$ based on all observations up to time $t-1$.

Often the stochastic-adversarial hybrid problem has been studied with adversarially chosen context but stochastic losses. 
In our work, all upper bounds hold in the stronger notion where both the losses and the contexts are adaptive, while the lower bounds hold for the weaker notion where only the contexts are adaptive.

\paragraph{Open problem \citep{FKL20}.}
The regret upper bounds for all regimes introduced above for a fixed policy class $\Pi$ of finite size are of the order $\tilde O(\sqrt{\ln(|\Pi|)KT})$ and can be achieved by the Exp4 algorithm \citep{auer2002nonstochastic}.
The question asked by \citet{FKL20}: 
For a nested sequence of policies $\Pi_1\subset\Pi_2\subset\dots\subset\Pi_M$, is there a universal $\alpha\in[\frac{1}{2},1)$ such that a regret bound of
\begin{align}
\label{eq:open}
    \Reg(T,\Pi_m) = \PolyLog(K,M)\tilde\cO\left(\ln(|\Pi_m|)^{1-\alpha}T^\alpha \right)
\end{align}
is obtainable for all $m\in[M]$ simultaneously?

W.l.o.g.\ we can assume that $M=\cO(\ln\ln(|\Pi_M|))=\cO(\ln(T))$.
Otherwise we take a  subset of policy classes that includes $\Pi_M$ and where two consequent policy classes at least square in size.
Due to nestedness, any guarantees on this subset of models imply up to constants the same bounds on the full set. 

\paragraph{S-switch}
A motivating example for studying nested policy classes is the S-switch problem.
The context is simply $x_t=t\in[T]$ and the set of policies is given by 
\begin{align*}
    \Pi_S = \left\{\pi \bigg\vert \sum_{t=1}^{T-1}\bbI\{\pi_t\neq \pi_{t+1}\}\leq S\right\}\,,
\end{align*}
the set of policies that changes its action not more than $S$ many times.
Any positive result for contextual bandits with finite sized policy classes would provide algorithms that adapt to the number of switch points, since $\ln|\Pi_S| = \tilde \cO(S)$. To make clear what problem we are considering, we are using $\Reg_{SW}(T,S)$ to denote the regret in the switching problem.

Next, we define the class of \emph{proper} algorithms which choose their policy at every time step $t$ independently of context $x_t$. Restricting our attention to such algorithms greatly reduces the technicalities for lower bound proofs in the non-adaptive regimes. The lower bound for this class of algorithms is also at the core of the argument for adaptive (improper) algorithms in stochastic environments.

\begin{definition}
We call an algorithm \emph{proper}, if at any time $t$, the algorithm follows the recommendation of a policy $\pi_{i_t} \in \Pi_M$, and if
the choice of $i_t$ by the algorithm, is independent of the context $x_t$.
\end{definition}
\paragraph{Example.} EXP4 is proper.

The properness assumption intuitively allows us to reduce the model selection problem to a bandit-like problem in the space of all policies $\Pi_M$. We give more details in Section~\ref{sec:stocastic lower bound} and Appendix~\ref{app:lower_bounts_proper_alg}.

\section{Upper bounds}
\label{sec:upper}
In this section, we generalize the Hedged-FTRL algorithm \citep{FGMZ20} to obtain an upper bound for model selection over a large collection of $\sqrt{T}$ regret algorithms.

\begin{theorem}
\label{thm:upper main}
    For any $\C>0$, we can tune Hedged-FTRL over a selection of $M$ instances of EXP4 operating on policy classes $\Pi_1,\dots\Pi_M$, such that the following regret bound holds uniformly over all $m\in[M]$
    \begin{align*}
        \Reg(T,\Pi_m) = \tilde\cO\left(\max\left\{\C,\frac{\ln|\Pi_m|}{\C}\right\}\sqrt{MKT}\right)\,.
    \end{align*}
\end{theorem}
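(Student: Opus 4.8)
The plan is to view this as a corralling problem: we run $M$ base algorithms, the $m$-th being an instance of EXP4 on $\Pi_m$ with its native regret $\tilde\cO(\sqrt{\ln|\Pi_m| KT})$ against any comparator in $\Pi_m$, and on top of them run a master algorithm (Hedged-FTRL in the style of \citep{FGMZ20}) that at each round samples one base learner to follow. The decomposition is standard: $\Reg(T,\Pi_m) \le \Reg_{\text{master}} + \Reg_{\text{base }m}$, where $\Reg_{\text{master}}$ is the regret of the master for competing with always playing base learner $m$, and $\Reg_{\text{base }m}$ is the internal regret of EXP4$_m$ on the subsequence of rounds it is actually played (or, more precisely, an importance-weighted version of it). The key design choice is to give the master a \emph{non-uniform} learning rate / regularization per base learner so that the price it pays to single out base learner $m$ scales like $\frac{\ln|\Pi_m|}{\C}\sqrt{MKT}$ rather than uniformly like $\sqrt{M\ln|\Pi_M|KT}$, while simultaneously the internal regret of EXP4$_m$, amplified by the inverse sampling probability the master guarantees it, contributes the $\C\sqrt{MKT}$ term. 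Trading these off gives the $\max\{\C,\ln|\Pi_m|/\C\}$ form.

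Concretely, I would instantiate Hedged-FTRL with the negative-entropy-plus-log-barrier (or Tsallis-type) regularizer over the $M$-simplex used in \citep{FGMZ20,agarwal2016corralling}, with a separate scale $\eta_m$ for coordinate $m$. The master's FTRL analysis yields a bound of the form $\sum_m \frac{\text{(complexity budget)}_m}{\eta_m} + \eta_m \sum_t (\text{stability term})_t$, and crucially the corralling framework lets one force the probability of playing base $m$ to stay above a floor $\rho_m$, at the cost of an additive $\tilde\cO(\frac{1}{\eta_m \rho_m})$-type penalty; EXP4$_m$ then sees losses of magnitude $\le 1/\rho_m$ on the rounds it is queried, so its contribution becomes $\sqrt{\ln|\Pi_m| \cdot KT / \rho_m}$ (using the standard EXP4 bound with the inflated range). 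Setting the floors $\rho_m \asymp 1/M$ (uniform exploration over the $M$ base learners, which is where the $\sqrt M$ enters) and choosing $\eta_m$ so that the master's cost of isolating $m$ is $\asymp \frac{\ln|\Pi_m|}{\C}\sqrt{MKT}$ while the amplified base cost is $\asymp \C\sqrt{MKT}$ makes both branches of the $\max$ appear; one then takes the worse of the two, giving exactly $\tilde\cO(\max\{\C,\ln|\Pi_m|/\C\}\sqrt{MKT})$. Since we reduced w.l.o.g.\ to $M=\cO(\ln T)$, no extra $M$-dependence beyond the explicit $\sqrt M$ survives into the $\tilde\cO$.

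The steps in order: (1) state the master/base decomposition and recall the EXP4$_m$ guarantee with inflated loss range $1/\rho_m$; (2) set up Hedged-FTRL with per-coordinate regularization scales $\eta_m$ and probability floors $\rho_m$, and invoke the FTRL regret bound from \citep{FGMZ20} in the form $\Reg_{\text{master}}^{(m)} = \tilde\cO\big(\frac{\log(\cdot)}{\eta_m} + \eta_m KT + \frac{1}{\eta_m\rho_m}\big)$ (the $KT$ because losses fed to the master lie in $[0,1]$ but the importance-weighted estimates have second moment $\lesssim K/\rho_m$, handled by the log-barrier term as in the Hedged-FTRL analysis); (3) combine into $\tilde\cO\big(\frac{\log(\cdot)}{\eta_m} + \eta_m KT + \frac{1}{\eta_m\rho_m} + \sqrt{\ln|\Pi_m| KT/\rho_m}\big)$; (4) plug in $\rho_m \asymp 1/M$ and $\eta_m \asymp \frac{\C}{\sqrt{MKT}}$, and verify both $\frac{1}{\eta_m\rho_m}$ and $\eta_m KT$ reduce to $\tilde\cO(\C\sqrt{MKT})$ while $\frac{\log}{\eta_m} = \tilde\cO(\frac{1}{\C}\sqrt{MKT})$ — here is where one actually wants the master's "singling out $m$" term to carry the $\ln|\Pi_m|$ factor, so I would instead scale $\eta_m$ (or equivalently the per-coordinate learning rate) by something like $\sqrt{\C/\ln|\Pi_m|}$ so that $\frac{\ln|\Pi_m|}{\eta_m} \asymp \frac{\ln|\Pi_m|}{\C}\sqrt{MKT}$; (5) take the maximum of the two regimes and collect $\tilde\cO$ factors.

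The main obstacle I expect is step (2)–(4): making the per-coordinate learning rates $\eta_m$ do double duty — simultaneously controlling the master's stability (which wants $\eta_m$ small), the probability-floor penalty $\frac{1}{\eta_m\rho_m}$ (also wants $\eta_m$ not too small), and the "isolate $m$" term $\frac{\ln|\Pi_m|}{\eta_m}$ — so that all three land on the right side of the $\max\{\C,\ln|\Pi_m|/\C\}$ without introducing spurious $\sqrt{\ln|\Pi_M|}$ or extra $M$ factors. The technically delicate point is that the master's loss estimates for base $m$ have range $1/\rho_m$ (from inverse-probability weighting of the played base learner), so the log-barrier part of Hedged-FTRL must be scaled correctly to absorb $1/\rho_m^2$ second moments; getting the constants right there, and confirming the EXP4$_m$ regret on its played subsequence really is $\tilde\cO(\sqrt{\ln|\Pi_m| KT/\rho_m})$ rather than $\tilde\cO(\sqrt{\ln|\Pi_m| KT}/\rho_m)$, is where I'd be most careful. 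I'd lean on the existing Hedged-FTRL analysis of \citep{FGMZ20} as a black box for the master regret and only re-derive the parts affected by the non-uniform $\eta_m$ and the $\ln|\Pi_m|$-dependent tuning.
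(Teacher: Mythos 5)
Your high-level architecture matches the paper: corral $M$ instances of EXP4 with Hedged-FTRL, use the $(1/2,\sqrt{\C_m T})$-stability of EXP4, decompose into master regret plus importance-weighted base regret, and tune per base to expose the $\max\{\C,\ln|\Pi_m|/\C\}$ trade-off. But the concrete mechanism you describe diverges from what the paper actually does, and as written it has gaps.

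The paper's Theorem~\ref{thm:upper general} uses a \emph{single} Tsallis regularizer $F(q)=-2\sum_i\sqrt{q_i}$ with a \emph{single} learning rate $\eta=1/\sqrt{T}$; all per-base tuning is pushed into the thresholds $\beta_m = \frac{1}{M}\max\{1,\C^2/\C_m\}$ and the bias terms $b_t$ satisfying \cref{eq:hedge constraint}, whose role is precisely to cancel the amplified base-regret term $\E[\sqrt{\rho_{T,m}}]\sqrt{\C_m T}$ inside the master's FTRL telescoping. Your sketch instead proposes per-coordinate learning rates $\eta_m$ and a \emph{uniform} probability floor $\rho_m\asymp 1/M$, which is closer to the original Corral log-barrier than to Hedged-FTRL. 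Two concrete problems result. First, a uniform floor makes the amplified base term $\sqrt{\ln|\Pi_m|\,KT/\rho_m}=\sqrt{M\ln|\Pi_m|\,KT}$, which only matches $\max\{\C,\ln|\Pi_m|/\C\}\sqrt{MKT}$ at $\C=\sqrt{\ln|\Pi_m|}$ — it does not expose a free parameter $\C$. The paper's $\beta_m$ is $\C$-dependent and base-dependent precisely to get the whole Pareto curve. Second, the factor $\ln|\Pi_m|$ cannot appear in the master's "cost of isolating $m$": the master is doing FTRL over the $M$-simplex, so its regularizer diameter is $\sqrt{M}$ (Tsallis) or $\log M$, independent of policy-class sizes. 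The $\ln|\Pi_m|$ factor enters only through the stability term $\E[\sqrt{\rho_{T,m}}]\sqrt{\C_m T}$ and the cumulative biases $\sum_i R_i/\sqrt{\beta_i}$; your claim that one should tune $\eta_m$ so that $\ln|\Pi_m|/\eta_m \asymp \frac{\ln|\Pi_m|}{\C}\sqrt{MKT}$ is not grounded in any term in the master's analysis. The missing ingredient is exactly the bias-cancellation constraint \cref{eq:hedge constraint}: without it, the importance-weight amplification $\E[\sqrt{\rho_{T,m}}]$ is not absorbed and the budget calculus you set up does not close.
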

\begin{wrapfigure}[13]{r}{6.5cm}
\vspace{-0.15cm}
\begin{algorithm}[H]
\caption{Hedged FTRL}
\label{alg:dylan}
\DontPrintSemicolon
\LinesNumberedHidden
\KwIn{$\alpha, R, \beta, Top, (Base_i)_{i=1}^M$}
\For{$t= 1,\dots,T$}{
Get $M_t, q_{t,M_t}$ from $Top$\;
Let $Base_{M_t}$ play the next round and receive $A_t$\;
Play $A_t$ and observe $\ell_{t,A_t}$\;
Update $Base_{M_t}$ with $\ell_{t,A_t}/q_{t,M_t}$\;
Update $Top$ with $(M_t,\ell_t)$\;
\If{$q_{t+1}$ would violate \cref{eq:hedge constraint}}{
Bias losses by $b_t$ to ensure \cref{eq:hedge constraint}.
}
}
\end{algorithm}
\end{wrapfigure}
\paragraph{Hedged-FTRL.}
$(\alpha,R)$-hedged FTRL, introduced in \citet{FGMZ20}, is a type of Follow the Regularized Leader (FTRL) algorithm which is used as a corralling algorithm~\citep{agarwal2016corralling}. At every round $t$, the algorithm chooses to play one of $M$ base algorithms $(Base_i)_{i=1}^M$. Base algorithm $i$ is selected with probability $q_{t,i}$, where $q_t \in \Delta^{M-1}$ is a distribution over base algorithms determined by the FTRL rule $q_t = \arg\min_{q \in \Delta^{M-1}} \langle q, L_t - B_t \rangle + F(q)/\eta$, where $L_t \in \mathbb{R}^M_{+}$ is the sum of the loss vectors $(\mathbf{e}_{M_s}\ell_{s,A_s}/q_{s,M_s})_{s=1}^{t-1}$, $F:\Delta^{M-1}\to \mathbb{R}$ is the potential induced by the $\alpha$-Tsallis entropy, $\eta$ is a step size determined by the problem parameters and $B_t$ is a special bias term which we now explain. Define $\rho_{t,m}^{-1} = \min\{\beta_m,\min_{s\in[t]}q_{s,m}\}$, and initialize $B_{0,m}=\rho_{1,m}^\alpha R_m$. Here $\rho_t$ is a vector which tracks the variance of the loss estimators, $R$ is a vector with regret upper bounds for the base algorithms, and $\beta\leq q_1$ is a threshold depending on the base algorithms.
At any time $t$, after selecting base algorithm $M_t$ to play the current action, the top (corralling) algorithm observes its loss and gives as feedback an important weighted loss to the selected base, $M_t$.
Whenever the base played at round $t$ would satisfy $\rho_{t+1,M_t}>\rho_{t,M_t}$, the loss fed to the top algorithm is adjusted with a bias $b_{t,M_t}$, such that the cumulative biases track the quantity $\rho_{t+1,m}^{\alpha} R_m$. This has been shown to be always possible~\citep{FGMZ20}.
The condition for adjusting the biases reads
\begin{align}
\label{eq:hedge constraint}
    \forall m\in[M]:\,B_{0,m}+\sum_{s=1}^tb_{s,m}=\rho_{t+1,m}^{\alpha} R_{m}\,.
\end{align}

The condition in Equation~\ref{eq:hedge constraint} is motivated in a similar way to the stability condition in the work of \citet{agarwal2016corralling}. Algorithm~\ref{alg:dylan} constructs an unbiased estimator for the loss vector, $\mathbf{e}_{M_t}\ell_{t,A_t}/q_{t,M_t}$, and updates each base algorithm accordingly. A similar update is present in the \textsc{Corral} algorithm~\cite{agarwal2016corralling} in which each of the base learners also receives an importance weighted loss. The regret of the base learners is assumed to scale with the variance of the importance weighted losses. This assumption is natural and in practice holds for all bandit or expert algorithms. The scaling of the regret, however, must be appropriately bounded as \citet{agarwal2016corralling} show, otherwise no corralling or model selection guarantees are possible. Formally, the following stability property is required. If an algorithm $\mathcal{B}$ enjoys a regret bound $R$ under environment $\mathcal{V}$ with loss sequence $(\ell_t)_{t=1}^T$, then the algorithm is \emph{$(\alpha,R)$-stable} if it enjoys a regret bound of the order $\mathbb{E}[\rho_{\max}^{\alpha}]R$ under the environment $\mathcal{V}'$ of importance weighted losses $(\hat\ell_t)_{t=1}^T$, where $\rho_{\max}$ is the maximum variance of the $T$ losses and the expectation is taken with respect to any randomness in $\mathcal{B}$. Essentially all bandit and expert algorithms used in practice are $(\alpha,R)$-stable with $\alpha \leq 1/2$, e.g., Exp4 is $(1/2,\sqrt{KT\ln(|\Pi|)})$-stable. The bias terms in Algorithm~\ref{alg:dylan} intuitively cancel the additional variance introduced by the importance weighted losses and this is why we require the biases to satisfy Equation~\ref{eq:hedge constraint}.

\begin{theorem}
\label{thm:upper general}
    Given a collection of base algorithms $(\cB_m)_{m=1}^M$ which are $(1/2,\sqrt{\C_m T})$-stable, that is
    \begin{align*}
        \forall m\in[M]:\,\Regimp(T,\cB_m) \leq \E[\sqrt{\rho_{Tm}}]\sqrt{\C_mT}\,,
    \end{align*}
    and any $\C\geq0$,
    then the regret of ($1/2,R,\beta)$-hedged Tsallis-Inf with $R_m = \sqrt{\C_mT}$, $\beta_m = \frac{1}{M}\max\{1,\frac{\C^2}{\C_m}\}$ satisfies a simultaneous regret of
    \begin{align*}
        \forall m\in[M]:\,\Reg(T,\cB_m) \leq 2\max\left\{\C,\frac{\C_m}{\C}\right\}\sqrt{MT} + \sqrt{2MT}\,.
    \end{align*}
\end{theorem}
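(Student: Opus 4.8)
The plan is to run the CORRAL-style regret decomposition of \citet{agarwal2016corralling,FGMZ20} on the meta-game whose ``arms'' are the $M$ base algorithms, and then feed in the prescribed parameters. Let $\hat\ell_t=\mathbf{e}_{M_t}\ell_{t,A_t}/q_{t,M_t}$ be the importance-weighted loss vector passed to the top algorithm, let $A_t^{(m)}$ be the action $\cB_m$ would play at round $t$, and let $\pi^\star$ attain the maximum defining $\Reg(T,\cB_m)$. Since $\E[\langle q_t,\hat\ell_t\rangle\mid\mathcal F_{t-1}]=\E[\ell_{t,A_t}\mid\mathcal F_{t-1}]$ and $\E[\hat\ell_{t,m}\mid\mathcal F_{t-1}]=\E[\ell_{t,A_t^{(m)}}\mid\mathcal F_{t-1}]$, one can write $\Reg(T,\cB_m)=\Regtop(T,\cB_m)+\Regimp(T,\cB_m)$, where $\Regtop(T,\cB_m)=\E[\sum_t\langle q_t,\hat\ell_t\rangle-\hat\ell_{t,m}]$ is the regret of hedged Tsallis-Inf against always selecting base $m$, and $\Regimp(T,\cB_m)=\E[\sum_t\hat\ell_{t,m}-\ell_{t,\pi^\star(x_t)}]$ is the regret of $\cB_m$ in the importance-weighted environment. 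The variance of each loss $\cB_m$ receives is at most $1/q_{t,m}\le\rho_{Tm}$ by the definition of $\rho$ and the $\beta_m$-thresholding, so the stability hypothesis immediately gives $\Regimp(T,\cB_m)\le\E[\sqrt{\rho_{Tm}}]\sqrt{\C_mT}$.

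Next I would peel off the bias sequence $b_t$ (supported on the played coordinate $M_t$) from $\Regtop(T,\cB_m)$: it equals the FTRL regret of $(1/2)$-Tsallis-Inf against the bias-adjusted loss sequence $(\hat\ell_t-b_t)_t$ plus the bias-balance term $\E[\sum_t\langle q_t,b_t\rangle-\sum_tb_{t,m}]$. From \cref{eq:hedge constraint} with $\alpha=\tfrac12$ we get $\sum_tb_{t,m}=(\sqrt{\rho_{T+1,m}}-\sqrt{\rho_{1,m}})R_m$, so $-\sum_tb_{t,m}$ cancels the inflation $\E[\sqrt{\rho_{Tm}}]R_m$ of the base regret up to the deterministic residual $\sqrt{\rho_{1,m}}\,R_m=\sqrt{\beta_m^{-1}\C_mT}$ (using $\rho_{Tm}\le\rho_{T+1,m}$ and monotonicity of $\sqrt{\cdot}$). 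Plugging $\beta_m=\tfrac1M\max\{1,\C^2/\C_m\}$ turns this residual into $\sqrt{MT}\,\min\{\sqrt{\C_m},\,\C_m/\C\}\le\max\{\C,\C_m/\C\}\sqrt{MT}$, which is one of the two leading terms in the claim. It then remains to absorb the FTRL regret against $(\hat\ell_t-b_t)_t$ together with the nonnegative bias cost $\E[\sum_t\langle q_t,b_t\rangle]$ into $\max\{\C,\C_m/\C\}\sqrt{MT}+\sqrt{2MT}$.

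For the FTRL part I would use the local-norm stability analysis of the $1/2$-Tsallis potential $F(q)=-\sum_i\sqrt{q_i}$: one has $F(\mathbf{e}_m)-\min_{q\in\Delta^{M-1}}F(q)\le\sqrt M$, and the per-round stability term is controlled, up to a universal constant, by $\eta\sum_iq_{t,i}^{3/2}\hat\ell_{t,i}^2$, whose conditional expectation is at most a constant times $\eta\sum_iq_{t,i}^{1/2}\le\eta\sqrt M$ by Cauchy--Schwarz; optimizing the step size ($\eta=\sqrt{2/T}$) then produces exactly the additive $\sqrt{2MT}$, just as for plain $1/2$-Tsallis-Inf. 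The two delicate points, and the main obstacle, are: (a) that the $\sqrt{\cdot}$ regularizer admits this stability estimate \emph{without} a smallness condition on the unbounded importance-weighted losses --- the special feature exploited by Tsallis-INF-style analyses; and (b) that the biases $b_t$ do not spoil it --- because each $b_t$ adds only a nondecreasing bonus toward the played coordinate $M_t$, calibrated to track $\sqrt{\rho_{t+1,m}}R_m$ exactly, the cost $\sum_t\langle q_t,b_t\rangle$ is self-bounding and can be charged against the growth of $\rho_{\cdot,m}$ that the threshold $\beta_m$ caps. This coupling between the bias construction, the $\rho$-threshold $\beta_m$, and the FTRL stability term is exactly the hedged-FTRL machinery of \citep{FGMZ20}; I would either cite their regret lemma directly or re-derive it in the appendix for this parameter regime. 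Granting it, the proof closes by collecting the three contributions into $2\max\{\C,\C_m/\C\}\sqrt{MT}+\sqrt{2MT}$.
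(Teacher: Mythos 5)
Your decomposition matches the paper's proof exactly: split $\Reg(T,\cB_m)$ into the top regret against fixing base $m$ plus the importance-weighted base regret, invoke $(1/2,\sqrt{\C_mT})$-stability for the latter, use the bias telescope $\sum_t b_{t,m}=(\sqrt{\rho_{T+1,m}}-\sqrt{\rho_{1,m}})R_m$ to cancel the $\E[\sqrt{\rho_{Tm}}]R_m$ inflation down to the residual $\sqrt{\rho_{1,m}}R_m=\sqrt{\beta_m^{-1}\C_mT}\le\max\{\C,\C_m/\C\}\sqrt{MT}$, and bound the remaining FTRL terms. Two things are glossed over but are exactly where the paper does the work. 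First, you rewrite $\Regtop$ as FTRL regret on the bias-adjusted losses $(\hat\ell_t-b_t)$ plus $\E[\sum_t\langle q_t,b_t\rangle-\sum_tb_{t,m}]$, but then state the stability term with $\hat\ell_{t,i}^2$ rather than $(\hat\ell_{t,i}-b_{t,i})^2$; the bonuses $b_t$ make the adjusted losses signed and a direct Tsallis-INF local-norm bound on them is not immediate. The paper avoids this by a Fenchel--Young step showing $\bar F^*(-(\hat L_t-\tilde B_t))-\bar F^*(-(\hat L_t-\tilde B_{t-1}))\le\langle q_{t+1},b_t\rangle$, so the Bregman increment is taken only in $\hat\ell_t$ and the bias shift is peeled off without entering the stability term. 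Second, you defer the explicit bias-cost bound to FGMZ20; the paper establishes $\E[\sum_t\langle q_{t+1},b_t\rangle]\le\C\sqrt{MT}$ by rewriting the sum as $\sum_{i}\sum_t\rho_{t+1,i}^{-1}\bigl(\sqrt{\rho_{t+1,i}}-\sqrt{\rho_{t,i}}\bigr)R_i$ and bounding each inner sum as a Riemann sum whose value is controlled by the threshold $\beta_i$, which is what produces the second $\max\{\C,\C_m/\C\}\sqrt{MT}$ contribution. These are details you signal you would supply by citing or re-deriving the hedged-FTRL lemma, so the route is the same as the paper's; just be aware that the Fenchel--Young split is not cosmetic but is what keeps the biases out of the stability analysis.
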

The analysis follows closely the proof of \citet{FGMZ20} and is postponed to \ifsup the supplementary material\else \cref{app:upper}\fi.

\cref{thm:upper general} recovers the bounds of \citet{pacchiano2020model} for model selection in linear bandits, but holds in more general settings including adaptive adversaries in both contexts and losses. It neither requires nestedness of the policies nor that the policies operate on the same action or context space.
\begin{proof}[Proof of \cref{thm:upper main}]
 The EXP4 algorithm initialized with policy class $\Pi_m$ satisfies the condition of \cref{thm:upper general} with $\C_m=\cO(\ln|\Pi_m|)$, as shown in \citet{agarwal2016corralling}.
 Hence \cref{thm:upper main} is a direct corollary of \cref{thm:upper general}.
\end{proof}

\section{Lower bounds}
\label{sec:lower}
We present lower bounds that match the upper bounds from \cref{sec:upper} up to logarithmic factors, thereby proving a tight Pareto frontier of worst-case regret guarantees in model selection for contextual bandits.

In the first part of this section, we consider a special instance of $S$-switch with \emph{adaptive} adversary.
The proof technique based on Pinsker's inequality is folklore and leads to the following theorem.

\begin{theorem}
\label{thm:adaptive simple}
For any $K\geq 3$, sufficiently large $T$, and any algorithm with regret guarantee
\begin{align*}
    \Reg_{SW}(T,1) = \cO(\C\sqrt{KT})\,, 
\end{align*}
there exists for any number of switches $S=\Omega(\C^2)$ a stochastic bandit problem such that 
\begin{align*}
    \Reg_{SW}(T,S) = \Omega\left(\min\left\{\frac{S}{\C}\sqrt{KT},T\right\}\right) \,.
\end{align*}
This bound holds even when the agent is informed about the number of switches up to time $t$.
\end{theorem}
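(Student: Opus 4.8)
The plan is to prove a single‑instance ``Pareto'' lower bound (in the spirit of \cite{lattimore2015pareto}) and then amplify it $\Theta(S)$‑fold by packing $\Theta(S)$ nearly independent such instances into one $S$‑switch environment, using an adaptive adversary to recycle the learner's exploration budget. Concretely, I would split $[T]$ into $N=\lfloor S/2\rfloor$ consecutive epochs of length $\tau=\lfloor T/N\rfloor=\Theta(T/S)$, fix gap parameters $\Delta<\epsilon:=2\Delta$ (to be tuned), and consider Bernoulli environments in which arm $1$ \emph{always} has mean $\tfrac12-\Delta$ while, in each epoch $j$, one adaptively chosen ``challenger'' $a_j\in\{2,\dots,K\}$ has mean $\tfrac12-\epsilon$ and all other arms have mean $\tfrac12$. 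Since the challenger changes only at epoch boundaries, the policy playing $a_j$ throughout epoch $j$ lies in $\Pi_S$ (at most $N-1\le S$ switches) with loss $T(\tfrac12-\epsilon)$, while the constant policy ``arm $1$'' lies in $\Pi_1$ with loss $T(\tfrac12-\Delta)$; hence the hypothesis $\Reg_{SW}(T,1)=\cO(\C\sqrt{KT})$ already upper bounds the learner's loss by $T(\tfrac12-\Delta)+\cO(\C\sqrt{KT})$. Applying the hypothesis in the ``all‑boring'' environment $W_0$ (no challengers, arm $1$ optimal) gives $\sum_{i\ge2}\E_{W_0}[N_i(T)]\le \cO(\C\sqrt{KT})/\Delta$ where $N_i(I)$ counts pulls of $i$ over round‑set $I$; summing over epochs, at least $N/2$ epochs $j$ are \emph{committed}, meaning $\sum_{i\ge2}\E_{W_0}[N_i(\text{epoch }j)]\le \cO(\C\sqrt{KT})/(\Delta N)$, and in each committed epoch some arm has $\E_{W_0}$‑play at most a $1/(K-1)$ fraction of that, i.e. $\le \cO(\C\sqrt{KT})/(\Delta N K)$ (this is where $K\ge3$ enters, to make the average over $\{2,\dots,K\}$ informative). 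That rarely‑played arm is the challenger $a_j$.

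For a single committed epoch $j$ the per‑epoch bound is then standard Pinsker bookkeeping: the environment $W_j$ that differs from $W_0$ only by making $a_j$ good during epoch $j$ produces identical feedback elsewhere and, inside epoch $j$, differs only in the law of $a_j$'s loss, so $\kl(\bbP_{W_0}\|\bbP_{W_j})\le\kl(\Ber(\tfrac12)\|\Ber(\tfrac12-\epsilon))\,\E_{W_0}[N_{a_j}(\text{epoch }j)]=\Theta(\epsilon^2)\,\E_{W_0}[N_{a_j}(\text{epoch }j)]$ on the trajectory up to the end of epoch $j$; Pinsker gives $\E_{W_j}[N_{a_j}(\text{epoch }j)]\le \E_{W_0}[N_{a_j}(\text{epoch }j)]+\tau\,\TV{\bbP_{W_0}-\bbP_{W_j}}$, and since in $W_j$ every non‑$a_j$ action costs at least $\epsilon-\Delta=\Delta$ relative to $a_j$ during epoch $j$, the learner's epoch‑$j$ regret against the $\Pi_S$ comparator (which plays each $a_j$ in its epoch) is at least $\Delta\,(\tau-\E_{W_j}[N_{a_j}(\text{epoch }j)])$. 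The tuning: using the committed‑epoch bound, $\tau\,\TV{\bbP_{W_0}-\bbP_{W_j}}\le \tau\,\Theta(\epsilon)\sqrt{\C\sqrt{KT}/(\Delta N K)}$, and balancing this and the $\E_{W_0}$ term against $\tau/4$ with $\epsilon=2\Delta$ yields $\Delta=\Theta\!\big(NK/(\C\sqrt{KT})\big)=\Theta\!\big(S\sqrt{K}/(\C\sqrt{T})\big)$; one checks $\Delta\le\tfrac12$ exactly in the regime $\tfrac S\C\sqrt{KT}=\cO(T)$, and the $\E_{W_0}$ term is $\le\tau/4$ provided $S=\Omega(\C^2)$. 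With this $\Delta$ each committed epoch contributes $\Omega(\Delta\tau)=\Omega(\sqrt{KT}/\C)$, so summing over the $\Theta(S)$ committed epochs gives $\Reg_{SW}(T,S)=\Omega\!\big(\tfrac S\C\sqrt{KT}\big)$; when $\tfrac S\C\sqrt{KT}$ exceeds $\Theta(T)$ one instead fixes $\Delta=\Theta(1)$, packs $\Theta(S)$ challenger epochs of length $\Theta(1/\Delta^2)$ to saturate the horizon, and gets $\Omega(T)$, matching $\min\{\tfrac S\C\sqrt{KT},T\}$.

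The main obstacle — and the reason the adversary must be \emph{adaptive} — is that the clean per‑epoch argument above used that $W_j$ disagrees with $W_0$ in a single epoch; a single $S$‑switch instance must install all $\Theta(S)$ challengers, and the change‑of‑measure errors from earlier epochs threaten to accumulate and swamp Pinsker. The fix I would implement is an epoch‑by‑epoch induction: the adversary chooses $a_j$ at the start of epoch $j$ from the learner's realized history, as an arm whose conditional play probability in epoch $j$ is below the $1/(K-1)$‑average of the non‑arm‑$1$ mass that the $\C$‑budget permits; inductively, if each past challenger has been played rarely (exactly what the statement asserts), the KL injected by past epoch $k$ is $\Theta(\epsilon^2)$ times a small quantity, the total deviation from $W_0$ stays $\cO(1)$, and the paragraph‑2 bound survives with conditional expectations in place of $\E_{W_0},\E_{W_j}$. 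Finally, informing the learner of the number of switches so far is harmless, since the epoch boundaries are fixed in advance (so this reveals only the epoch index); the whole difficulty is identifying which $a_j$ is good, and that is precisely what the exploration budget inherited from the $\Pi_1$‑guarantee prevents the learner from doing quickly.
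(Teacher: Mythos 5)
Your construction differs in two key structural choices from the paper's: you fix the epoch boundaries in advance and let the adversary adaptively pick \emph{which arm} is the challenger, while the paper fixes the best arm in each phase to be uniformly random (oblivious) and lets the adversary adaptively pick \emph{when} the phase ends (specifically, a phase ends once the learner has spent $\Nmax$ rounds on arms in $[K-1]$). The paper also introduces a crucial ``safe'' arm $K$ whose mean $\tfrac12-\tfrac78\Delta$ sits strictly \emph{between} the optimal arm's $\tfrac12-\Delta$ and the non-optimal arms' $\tfrac12$. This is what makes the $\Pi_1$ budget argument clean: every round spent in $[K-1]$ that is not on $a^*_s$ costs $\tfrac78\Delta$ against arm $K$, while even rounds spent on $a^*_s$ only recover $\tfrac18\Delta$, so \cref{lem:basic} shows that each \emph{completed} phase injects $\Omega(\Delta\Nmax)$ regret against arm $K$ regardless of how cleverly the learner behaves. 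Your construction has no such device: playing the challenger beats arm $1$ by exactly the same margin $\Delta$ as playing a non-challenger costs, so a learner that locates the challenger quickly in some epoch earns \emph{negative} $\Pi_1$-regret there and can recycle that credit into heavy exploration elsewhere — your budget $\sum_{i\ge2}\E[N_i(T)]\le\cO(\C\sqrt{KT})/\Delta$ is only valid in $W_0$ and does not survive the change of measure to $W$.

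This feeds directly into the second gap, which you name but do not close: the KL/TV accumulation across epochs. Your per-epoch Pinsker step compares $W_0$ to $W_j$ (identical everywhere except one epoch), but the environment the learner actually faces, $W$, has challengers installed in \emph{every} epoch, and $\|\bbP_{W_0}-\bbP_{W}\|_{TV}$ accumulates as $\sum_j\Theta(\epsilon)\sqrt{\E_{W_0}[N_{a_j}(\text{epoch }j)]}$, which for a budget spread evenly over $N=\Theta(S)$ epochs is $\Theta(\sqrt{S})$ — far from the $\cO(1)$ you assert. The inductive ``fix'' you sketch is also under-specified: if $a_j$ is chosen from the unconditional $\E_{W_0}$-play (so the budget inequality applies), the adversary is in fact oblivious and the accumulation kills Pinsker; if $a_j$ is chosen from the $H_{j-1}$-conditional expectation, the $\Pi_1$-budget-per-epoch inequality you use (which is a Markov/pigeonhole statement about the \emph{unconditional} $\E_{W_0}$ sequence) no longer applies. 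The paper avoids this entirely by \emph{never} doing a global change of measure: \cref{lem:basic} is a self-contained single-phase argument (fresh random best arm, fresh $\cE_0$ vs.\ $\cE_1$ comparison), and the theorem-level argument simply trades off the two events ``agent triggers all $S$ switches'' vs.\ ``agent does not,'' using the adaptive phase lengths to force the agent to spend $\Nmax$ explorations per completed phase without any cross-phase information leakage. If you want to pursue your route, you would need (i) a replacement for the safe-arm trick so that the $\Pi_1$ regret genuinely upper-bounds exploration in $W$, and (ii) a fully conditional, martingale-style version of the budget/Pinsker argument that controls each epoch given its own prefix rather than relative to a global $W_0$ — both are nontrivial and neither appears in the sketch.
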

Since this bound holds even when the agent is informed about when a switch occurs, we can restrict the policy class to policies that only switch arms whenever the agent is informed about a switch in the environment. This as a contextual bandit problem with context $\cX=[S+1]$ and $|\Pi_S|=\Theta(K^S)$ policies. Hence \cref{thm:adaptive simple} implies a lower bound of 
    $\Reg(T,\Pi_S) = \Omega\left(\min\left\{\frac{\ln|\Pi_S|}{\C\ln(K)}\sqrt{KT},T\right\}\right)\,.$
In the second part of the section, we consider the \emph{stochastic} regime.
Our lower bound construction is non-standard and relies on bounding the total variation between problem instances directly without the use of Pinsker's inequality.
\begin{theorem}
\label{thm:stochastic simple}
There exist policy classes $\Pi_1\subset\Pi_2$
\footnote{
In \citet{FKL20} open problem 2, they ask about model based contextual bandit with realizability. Our lower bound is providing an instance of that.
}
with $|\Pi_2| =\Omega(\C^2)$,  such that if the regret of a proper algorithm is upper bounded in any environment by
\begin{align*}
    \Reg(T,\Pi_1) = \cO(\C\sqrt{T})\,,
\end{align*}
then there exists an environment such that
\begin{align*}
    \Reg(T,\Pi_2) = \Omega\left(\max\left\{\C,\frac{\ln|\Pi_2|}{\C}\right\}\sqrt{T}\right)\,.
\end{align*}
\end{theorem}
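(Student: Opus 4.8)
The plan is to use properness to reduce model selection to a bandit-type problem over policies, then to design a family of stochastic instances on which a $\cO(\C\sqrt T)$ guarantee on $\Pi_1$ is a genuinely binding budget constraint, and finally to convert that constraint into a lower bound on $\Reg(T,\Pi_2)$ through a change-of-measure argument carried out directly in terms of total variation.

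First I would pass to a combinatorial description of the learner. By properness the algorithm commits to a policy $\pi_{i_t}\in\Pi_2$ before seeing $x_t$, so the only statistics of its behaviour that enter the regret are, for each context $x$ and each policy $\pi\in\Pi_2$, the expected number of rounds on which the played policy agrees with $\pi$ on $x$. The regret against any fixed comparator decomposes additively over contexts into these agreement counts, and the hypothesis $\Reg(T,\Pi_1)=\cO(\C\sqrt T)$ becomes a linear constraint on them. This is the analogue of the ``reduction to a bandit over arms'' behind the stochastic multi-armed bandit Pareto frontier of \cite{lattimore2015pareto}, with arms replaced by policies.

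Second, I would build a two-class instance with $K=\cO(1)$ in which $\Pi_1$ is a ``base'' class with $\ln|\Pi_1|=\Theta(\C^2)$ --- chosen precisely so that a $\cO(\C\sqrt T)$ regret bound is equivalent to being worst-case optimal on $\Pi_1$, hence binding --- and $\Pi_2$ adjoins $\Theta(\ln|\Pi_2|)$ additional ``probe'' degrees of freedom that are invisible to every policy of $\Pi_1$. The hard environments are indexed by sign patterns on the probes: in the ``neutral'' pattern the unique optimal policies of $\Pi_1$ and $\Pi_2$ coincide, so the hypothesis forces the learner to spend at most an $\cO(\C\sqrt T)$ exploration budget on probe-related deviations; a pigeonhole argument then leaves an $\Omega(1)$ fraction of the probes explored too little to be resolved, and flipping the signs on those probes (so the $\Pi_2$-optimal policy now disagrees with the $\Pi_1$-optimal one there) produces an environment on which each unresolved probe contributes its share of regret, summing to $\Omega(\tfrac{\ln|\Pi_2|}{\C}\sqrt T)$. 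Since we may take $\ln|\Pi_1|=\Theta(\C^2)\le\ln|\Pi_2|$ we always have $\C\le\sqrt{\ln|\Pi_2|}$, so $\max\{\C,\ln|\Pi_2|/\C\}=\ln|\Pi_2|/\C$ and this is the claimed bound; the assumption $|\Pi_2|=\Omega(\C^2)$ is exactly what makes the budget too small to resolve all probes, and the implicit constants are what forces the ``not exceeding $\Theta(T)$'' caveat.

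The crux, and the step I expect to be the main obstacle, is this last change of measure. I must keep the learner's behaviour on each flipped probe close to its behaviour in the neutral environment, and I must accumulate these closeness statements over $\Theta(\ln|\Pi_2|)$ probes without losing a $\sqrt{\ln|\Pi_2|}$ factor. The naive route --- bound each pairwise $\TV{\bbP-\bbP'}$ by Pinsker's inequality in terms of the KL, which scales as (squared gap) times (plays on that probe), and combine by Cauchy--Schwarz across probes --- only recovers the worst-case rate $\sqrt{\ln|\Pi_2|\cdot T}$, not the desired $\tfrac{\ln|\Pi_2|}{\C}\sqrt T$. Instead I would engineer the probe loss channels to be asymmetric (one branch a point mass on its worst value), so that the two interaction processes can be coupled to coincide unless a specific low-probability observation ever occurs, yielding a total-variation bound that is \emph{linear} in the number of probe plays and whose triggering events are disjoint across probes. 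Summing those disjoint events keeps the total information leakage within the $\cO(\C\sqrt T)$ budget, which is precisely why the final rate carries $\ln|\Pi_2|/\C$ rather than its square root --- the sense in which the argument must bypass Pinsker and estimate total variation directly. Finally, for completeness one checks the $\Omega(\C\sqrt T)$ branch of the maximum, the (easier) statement that an algorithm pinned to worst-case optimality on the base class cannot beat the worst case on the strictly larger class $\Pi_2$.
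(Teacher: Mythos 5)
Your high-level architecture -- reduce via properness, extract an exploration budget from the $\Pi_1$ constraint by Markov's inequality, show the budget is too small to distinguish a neutral environment from a perturbed one, and conclude a linear-in-gap regret on $\Pi_2$ -- matches the paper. But two of your concrete choices diverge from the paper's proof and, in my view, the second diverges in a way you have not shown can be repaired.

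First, the paper takes $\Pi_1=\{\pi_0\}$ a \emph{singleton}: one constant ``safe'' action whose loss is a point mass $\frac12-\frac14\Delta$. The $\cO(\C\sqrt T)$ hypothesis on $\Pi_1$ then directly caps the number $\cN$ of ``revealing'' plays via $\E_{\cE_0}[\cN]\cdot\Delta/4\le\C\sqrt T$, which with Markov gives a hard stopping-time budget $N\approx\C\sqrt T/\Delta$. Your plan to take $\ln|\Pi_1|=\Theta(\C^2)$ so that the $\Pi_1$ bound is ``exactly optimal'' adds no leverage: the singleton already makes the hypothesis a binding budget on exploration, while a large $\Pi_1$ would require a separate argument that exploring within $\Pi_1$ cannot leak information about the probes. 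You never establish that invisibility.

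Second, and more seriously, the change-of-measure technique you propose does not fit the information structure that produces the $\ln|\Pi_2|$ factor. In the paper, a single revealing play exposes the full vector $z_t\in\{0,1\}^k$ (all $k$ policy losses simultaneously), because $\ell_{t,1}=1-\ell_{t,2}$ and the context is observed. That full-information structure is exactly why the testing sample complexity is $\Theta(\ln k/\Delta^2)$ rather than $\Theta(k/\Delta^2)$, and the paper's key step (\cref{lem:sto lower}) bounds the TV between $\bbP_{\cE_0}$ and the uniform mixture $\frac1k\sum_i\bbP_{\cE_i}$ by analyzing the likelihood-ratio statistic $\sum_{h=1}^k\exp(\kappa n_h)$ --- a sum of $k$ i.i.d.\ terms --- via Berry--Esseen, getting a $1/\sqrt{k-1}$ error term. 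Your proposal instead engineers probe channels with rare ``triggering'' observations that are \emph{disjoint across probes}, and bounds TV by the total trigger probability. Disjointness of the $k$ triggering events at a single observation is incompatible with each observation revealing all $k$ coordinates of $z_t$; and if you instead retreat to bandit-style feedback (one probe revealed per play) so that disjointness is natural, the minimal indistinguishability budget jumps from $\Theta(\ln k/\Delta^2)$ to $\Theta(k/\Delta^2)$ full-info-equivalents (or the gap must scale like $1/k$ to keep $k$ disjoint rare events summing to $\cO(1)$), and the resulting regret lower bound on $\Pi_2$ comes out in terms of $|\Pi_2|$ rather than $\ln|\Pi_2|$, which does not match the theorem. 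You have correctly identified that Pinsker must be bypassed, and you are right that the bound must be obtained by estimating the TV directly, but the specific coupling you sketch achieves direct TV control at the cost of the very logarithmic dependence the theorem is about. The missing ingredient is precisely the paper's observation that with full information one can test ``is any coordinate biased?'' no faster than one can identify \emph{which} coordinate is biased, a fact proved by the Berry--Esseen analysis of the log-likelihood sum, not by a coupling.
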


These theorems directly provide negative answers to \citep{FKL20}.

\begin{corollary}
\label{cor:main}
There is no $\alpha\in[\frac{1}{2},1)$ that satisfies the regret guarantee of open problem \eqref{eq:open} for any algorithm in the adaptive adversarial regime or any proper algorithm in the stochastic case.
\end{corollary}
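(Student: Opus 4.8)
I would derive a contradiction from \eqref{eq:open} in each of the two regimes separately, via \cref{thm:adaptive simple} for adaptive adversaries and via \cref{thm:stochastic simple} for proper algorithms in the stochastic case. So suppose that, for some universal $\alpha\in[\tfrac{1}{2},1)$, an algorithm of the relevant type satisfies \eqref{eq:open}, and write its hidden factor as $\mathrm{polylog}(T)$: this is harmless because both constructions below use $M=2$ nested models with $K=\cO(1)$, so $\PolyLog(K,M)=\cO(1)$, the w.l.o.g.\ normalization $M=\cO(\ln T)$ holds trivially, and --- crucially, by the stated notation --- what remains is poly-logarithmic in $T$ but does \emph{not} depend on $|\Pi_m|$, so that $(\ln|\Pi_m|)^{1-\alpha}$ is the only channel through which model complexity enters the promised bound. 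In both regimes the idea is the same: run the matching lower bound at a complexity level sitting right at the boundary of its regime of validity, chosen large enough that it forces near-linear regret on the larger model $\Pi$, and observe that \eqref{eq:open} nonetheless promises only $\mathrm{polylog}(T)\,T^{\beta}$ with a fixed exponent $\beta<1$ on that very $\Pi$.

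For the adaptive regime I fix $K=3$ and apply \eqref{eq:open} to the nested pair $\Pi_1\subset\Pi_S$ of $1$-switch and $S$-switch policy classes. Since $\ln|\Pi_1|=\cO(\ln T)$, the $m=1$ guarantee is $\Reg_{SW}(T,1)=\cO(\mathrm{polylog}(T)\,T^{\alpha})=\cO(\C\sqrt{KT})$ with $\C=\Theta(\mathrm{polylog}(T)\,T^{\alpha-1/2})$, which meets the hypothesis of \cref{thm:adaptive simple}. I then take the number of switches $S=\Theta(\C\sqrt{T})$, large enough that $\tfrac{S}{\C}\sqrt{KT}\geq T$; this is legal because $\C=o(\sqrt{T})$ for $\alpha<1$ forces both $S=\Omega(\C^2)$ and $S\leq T$ once $T$ is large. \cref{thm:adaptive simple} then produces an instance with $\Reg_{SW}(T,S)=\Omega(\min\{\tfrac{S}{\C}\sqrt{KT},\,T\})=\Omega(T)$. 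But $S=\Theta(\C\sqrt{T})=\Theta(\mathrm{polylog}(T)\,T^{\alpha})$, so $\ln|\Pi_S|=\tilde\cO(S)=\mathrm{polylog}(T)\,T^{\alpha}$, and hence \eqref{eq:open} also forces $\Reg(T,\Pi_S)=\mathrm{polylog}(T)\,(\ln|\Pi_S|)^{1-\alpha}T^{\alpha}=\mathrm{polylog}(T)\,T^{1-(1-\alpha)^2}=o(T)$, using $1-(1-\alpha)^2\in[\tfrac{3}{4},1)$ for $\alpha\in[\tfrac{1}{2},1)$. The two bounds on $\Reg(T,\Pi_S)$ are incompatible for $T$ large, contradicting the existence of the algorithm.

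For the stochastic regime I run the same scheme through \cref{thm:stochastic simple}, whose construction is parametrized by $\C$, and set $\C=\sqrt{T}/\ln T$. Then $|\Pi_2|=\Theta(\C^2)$ gives $\ln|\Pi_2|=\Theta(\ln T)$, hence $\ln|\Pi_1|\leq\ln|\Pi_2|=\cO(\ln T)$ since $\Pi_1\subset\Pi_2$, and \eqref{eq:open} gives $\Reg(T,\Pi_1)=\cO(\mathrm{polylog}(T)\,T^{\alpha})=\cO(\C\sqrt{T})$ because $\C\sqrt{T}=T/\ln T$ dominates $\mathrm{polylog}(T)\,T^{\alpha}$ for $\alpha<1$; thus the hypothesis of \cref{thm:stochastic simple} is met. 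Its conclusion then exhibits an environment with $\Reg(T,\Pi_2)=\Omega(\max\{\C,\,\ln|\Pi_2|/\C\}\sqrt{T})=\Omega(\C\sqrt{T})=\Omega(T/\ln T)$ --- here $\C$ dominates $\ln|\Pi_2|/\C$, and $\C\sqrt{T}=T/\ln T\leq T$ keeps the bound below the trivial $\cO(T)$ ceiling --- whereas \eqref{eq:open} at the same time promises $\Reg(T,\Pi_2)=\mathrm{polylog}(T)\,(\ln|\Pi_2|)^{1-\alpha}T^{\alpha}=\mathrm{polylog}(T)\,T^{\alpha}=o(T/\ln T)$, again a contradiction. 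The step I expect to require the most care is precisely this bookkeeping: one must check that the chosen complexity level ($S$, respectively $\C$) simultaneously (i) satisfies the side conditions of the invoked lower bound, (ii) is not so small that the assumed \eqref{eq:open} guarantee on the smaller model fails to imply that lower bound's hypothesis, and (iii) keeps the lower bound below its trivial $\Theta(T)$ cap. Since the resulting separation is polynomial in $T$ in the adaptive case and polynomial up to a single $\ln T$ factor in the stochastic case, no fixed poly-logarithmic slack in \eqref{eq:open} can absorb it.
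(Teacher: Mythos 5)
Your adaptive-adversary argument is correct and is essentially the paper's own proof specialized via the $S$-switch parametrization: you push the larger class to the scale $\ln|\Pi_S| = \Theta(T^\alpha)$ (up to polylog) so that \cref{thm:adaptive simple} forces $\Omega(T)$ regret while \eqref{eq:open} promises only $\tilde\cO(T^{\alpha(2-\alpha)})=o(T)$. The paper's \cref{cor:main} does exactly this, directly setting $K=3$, $M=2$, $|\Pi_2|=\Theta(\exp(T^\alpha))$, $\C=\PolyLog(T)\,T^{\alpha-1/2}$, and concluding $\Reg(T,\Pi_2)=\tilde\Omega(T)$ from the $\tfrac{\ln|\Pi_2|}{\C}\sqrt{T}$ branch of the lower bound.

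Your stochastic argument, however, has a genuine gap. You set $\C=\sqrt{T}/\ln T$ and read \cref{thm:stochastic simple} as permitting $|\Pi_2|=\Theta(\C^2)$, concluding $\ln|\Pi_2|=\Theta(\ln T)$ and hence an \eqref{eq:open} promise of $\PolyLog(T)\,T^\alpha = o(T/\ln T)$. But the theorem is an existence statement: the construction used in its proof has $\Pi_2 = \{\pi_i : i\in[k]\cup\{0\}\}$ with $\ln k \geq c_2\C^2$ (see the first lines of the proof in the appendix), i.e.\ $\ln|\Pi_2| = \Omega(\C^2) = \Omega(T/\ln^2 T)$, not $\Theta(\ln T)$. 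Indeed the theorem cannot hold with $|\Pi_2|=\Theta(\C^2)$: a plain EXP4 over $\Pi_2$ is proper, attains $\Reg(T,\Pi_1)\le\Reg(T,\Pi_2)=\cO(\sqrt{\ln\C\cdot T})\le\C\sqrt{T}$ and therefore satisfies the hypothesis, yet its regret on $\Pi_2$ is $\cO(\sqrt{\ln\C\cdot T})\ll\C\sqrt{T}$, contradicting the claimed conclusion. With the correct scale $\ln|\Pi_2|=\Theta(T/\ln^2 T)$, the bound \eqref{eq:open} promises for $\Pi_2$ only $\PolyLog(T)\cdot(T/\ln^2 T)^{1-\alpha}T^\alpha=\PolyLog(T)\cdot T$, which does not contradict the $\Omega(T/\ln T)$ lower bound, so your chosen parameters produce no contradiction. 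The $\C$ branch of the $\max$ is not the operative one here; you should instead mirror your adaptive argument (and the paper's), take $\C=\Theta(\PolyLog(T)\,T^{\alpha-1/2})$ from the \eqref{eq:open} guarantee on $\Pi_1$, take $\ln|\Pi_2|=\Theta(T^\alpha)$ (which satisfies $\ln|\Pi_2|\ge c_2\C^2$ for $\alpha<1$ and $T$ large), and use the $\tfrac{\ln|\Pi_2|}{\C}\sqrt{T}$ branch to force $\tilde\Omega(T)$ against the $\tilde\cO(T^{\alpha(2-\alpha)})=o(T)$ promise.
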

\begin{proof}
By \cref{thm:adaptive simple,thm:stochastic simple}, for any $\alpha>0$ there exists $K=3,\,M=2,\,|\Pi_1|=1, |\Pi_2|=\Theta(\exp(T^{\alpha}))$. Assume that $\Reg(T,\Pi_1) \leq C_TT^\alpha=C_TT^{\alpha-\frac{1}{2}}\sqrt{T}\,,$
where $C_T=\PolyLog(T)$.
Hence by \cref{thm:adaptive simple} and \cref{thm:stochastic simple} there exist  environments where
\begin{align*}
    \Reg(T,\Pi_2) = \Omega\left(\frac{T^\alpha}{C_TT^{\alpha-\frac{1}{2}}}\sqrt{T}\right)=\tilde\Omega\left(T\right)\,.
\end{align*}
\end{proof}

Finally, we disprove the open problem in the stochastic case for any algorithm.
\begin{theorem}
\label{thm: improper}
No algorithm (proper or improper) can satisfy the requirements of open problem \eqref{eq:open} for all stochastic environments.
\end{theorem}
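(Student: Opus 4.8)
The plan is to lift the proper-algorithm lower bound of \cref{thm:stochastic simple} to arbitrary algorithms by engineering the hard instance so that the context is informationally inert for an improper learner. For a fixed $\alpha\in[\tfrac12,1)$ I would use a context space $\cX=[N]$ with $N=\Theta(T^\alpha)$, $P_{\cX}$ uniform, and a hidden parameter $\theta=(\theta_c)_{c\in[N]}$ that factorizes over $\cX$, with loss kernels $Q(\cdot\mid c,\theta)$ depending on $\theta$ only through the single coordinate $\theta_c$. In such an environment, at every round each algorithm --- proper or not --- is merely choosing which arm to pull in the sub-bandit indexed by the drawn context $x_t$, and observing the pulled arm's loss reveals one Bernoulli sample about $\theta_{x_t}$ and nothing about $\theta_c$ for $c\ne x_t$. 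This is exactly the feedback a proper algorithm receives, since it plays $\pi_{i_t}(x_t)$, i.e.\ the arm that its context-blind policy choice happens to assign to the current sub-bandit. Hence the per-coordinate exploration an improper learner can perform is still bounded by the number of rounds in which that coordinate's context appears, and improperness confers no advantage.

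With this in hand I would re-run the total-variation argument behind \cref{thm:stochastic simple} (which deliberately avoids Pinsker) for an arbitrary algorithm. Take $\Pi_1=\{\pi^0\}$, with $\pi^0$ the unique optimum of a ``null'' environment $\theta^0$, and let $\Pi_2\supset\Pi_1$ contain, for each coordinate $c$ and each off-optimal arm, a policy that becomes the unique optimum under the alternative $\theta^{(c)}$ obtained by tilting $\theta^0$ in coordinate $c$; this can be arranged with $|\Pi_2|=\exp(\Theta(T^\alpha))$. The hypothesis $\Reg(T,\Pi_1)=\cO(\C\sqrt T)$ forces the learner, on $\theta^0$, to place almost all of its mass on $\pi^0$, hence to under-explore the vast majority of the $N$ sub-bandits; choosing the tilted coordinate uniformly at random and bounding $\TV{\bbP_{\theta^0}-\bbP_{\theta^{(c)}}}$ directly on the induced law of the interaction --- conditioning on the realized context sequence decouples the sub-bandits and reduces this to the proper-case computation of \cref{thm:stochastic simple} --- shows that the learner must incur $\Reg(T,\Pi_2)=\Omega\!\big(\tfrac{\ln|\Pi_2|}{\C}\sqrt T\big)$ in expectation over $c$, hence in some environment of the family. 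In the regime relevant to \eqref{eq:open} this term dominates $\C\sqrt T$, so only this branch of the Pareto trade-off is needed.

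The conclusion then follows exactly as in \cref{cor:main}: writing $\Reg(T,\Pi_1)\le C_TT^\alpha=\C\sqrt T$ with $C_T=\PolyLog(T)$ and $\C=C_TT^{\alpha-\frac12}$, the lower bound above gives $\Reg(T,\Pi_2)=\Omega\!\big(\tfrac{T^\alpha}{C_TT^{\alpha-1/2}}\sqrt T\big)=\tilde\Omega(T)$, whereas \eqref{eq:open} with $|\Pi_2|=\exp(\Theta(T^\alpha))$ only permits $\Reg(T,\Pi_2)=\PolyLog(K,M)\,\tilde\cO\big(T^{1-(1-\alpha)^2}\big)=o(T)$. Since $\alpha<1$ this is a contradiction, and since some $\pi\in\Pi_2$ is exactly optimal in each environment of the family (realizability), it rules out every $\alpha\in[\tfrac12,1)$ for both proper and improper learners, which is the claim.

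The step I expect to be the main obstacle is making ``improperness is useless'' watertight \emph{while} keeping the instance strong enough to realize the trade-off: one must check that collapsing the per-context interaction to a single coordinate does not dilute the Lattimore-style Pareto mechanism --- i.e.\ that an $\cO(\C\sqrt T)$ budget on $\Pi_1$ genuinely starves $\Omega(N)$ of the $N=\Theta(\ln|\Pi_2|)$ sub-bandits of exploration --- and one must handle the bookkeeping in the $\TV{\cdot}$ bound, where the learner's arm in context $c$ is a randomized function of a history that interleaves feedback from all $N$ sub-bandits; the point is that conditioning on the context sequence decouples the sub-bandits and recovers the proper-case estimate, but this conditioning has to be carried out carefully.
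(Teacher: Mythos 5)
Your proposal does not follow the paper's route, and I don't think it can be made to work as stated. The paper keeps the same construction as \cref{thm:stochastic simple} (single shared $k$-dimensional observation vector $z_t$, $K=3$ arms, context $\{1,2\}^k$) and proves a new TV bound, \cref{lem:sto lower improper}, that handles improperness by a union bound over the $\binom{T}{N}\le T^N$ possible sets of observation times, yielding the weaker estimate $\min_i\bbP_{\cE_i}(E)-\bbP_{\cE_0}(E)\le 17T^N/\sqrt[4]{k-1}$. It then compensates by choosing $\Delta=\Theta(1)$, $N=\cO(\C\sqrt{T})$, and $k$ doubly-exponential, $k=1+68\exp(4(1+32\C\log(T)\sqrt{T}))$, which still gives a contradiction with \eqref{eq:open} because $\alpha<1$ leaves enough slack.

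Your route is to change the instance so that improperness is informationally inert, by factoring the context into $N=\Theta(T^\alpha)$ independent sub-bandits. That idea is clean, but the factored construction destroys exactly the feature the lower bound relies on, in two coupled ways.

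First, the regret in each alternative environment $\theta^{(c)}$ is incurred only on rounds where $x_t=c$, which is a $1/N$ fraction of the horizon. A learner that commits to $\pi^0$ thus suffers at most $\Theta(\Delta T/N)$ regret against the optimal policy in $\Pi_2$, not $\Theta(\Delta T)$. In the paper's construction a single Bernoulli loss with gap $\Delta$ controls the regret at \emph{every} round (action $3$ is $\Delta/4$ worse than the hidden optimal policy in every $\cE_i$), which is what permits a $\Theta(\Delta T)$ lower bound at constant $\Delta$.

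Second, because the sub-bandits are decoupled, one exploratory round reveals information about only a single $\theta_c$, whereas in the paper one revealing action discloses the entire vector $z_t$ — losses for \emph{all} $k$ candidate policies simultaneously. It is precisely this shared full-information reveal that turns the ``how many samples until you can reject $\cE_0$'' question into something governed by $\ln(k)$ via the Berry--Esseen argument of \cref{lem:sto lower}. In your construction the relevant quantity is the number $N$ of sub-bandits, not $\ln(\text{number of alternatives})$, and the two trade-offs do not match.

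Putting the two together: the budget $\Reg(T,\Pi_1)\le\C\sqrt T$ allows roughly $\C\sqrt T/\Delta$ exploratory rounds, which forces $\Delta\gtrsim N/(\C\sqrt T)$ before the learner can resolve all $N$ sub-bandits; plugging this into the $\Theta(\Delta T/N)$ regret gives only $\Reg(T,\Pi_2)=\Omega(\sqrt T/\C)$, independent of $N$ and far short of the required $\Omega(\tfrac{\ln|\Pi_2|}{\C}\sqrt T)=\Omega(\tfrac{T^\alpha}{\C}\sqrt T)$. You also cannot rescue this by declaring $|\Pi_2|=\exp(\Theta(T^\alpha))$ while using only $\Theta(N)$ single-tilt alternatives: the alternative family, not the ambient policy class, controls the TV budget, so padding $\Pi_2$ with unused policies does not strengthen the bound. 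The factored construction therefore cannot realize the Pareto trade-off, and you would still need something like the paper's $T^N$ union-bound in \cref{lem:sto lower improper} applied to the shared-$z_t$ construction to get a contradiction for improper learners.
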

We present the high level proof ideas in the following subsections and the detailed proof in \cref{app:lower}.
\subsection{Adaptive adversary: $S$-switch($\Delta)$ Problem}
We present the adaptive environment in which model selection fails and the proof of \cref{thm:adaptive simple}.

The adversary switches the reward distribution up to $S$ many times, thereby segmenting the time into $S+1$ phases $(1,\dots,\tau_1,\tau_1+1,\dots,\tau_2,\dots,\tau_S,\dots T)$.
We denote $x_t\in[S+1]$ as the counter of phases and assume the agent is given this information.
For each phase $x_t\in[S]$, the adversary selects an optimal arm $(a^*_{s})_{s=1}^S$ uniformly at random among the first $K-1$ arms.
If $x_t\leq S$, the losses are i.i.d.\ Bernoulli random variables with means
\begin{align*}
    \E[\ell_{t,i}] = \frac{1}{2}-\begin{cases}
    0 & \mbox{ for }i\in[K-1]\setminus\{a^*_{x_t}\}\\
    \Delta &\mbox{ for }i=a^*_{x_t}\\
    \frac{7}{8}\Delta &\mbox{ for }i=K\,.
    \end{cases}
\end{align*}
In phase $S+1$, all losses are $0$ until the end of the game.
The adversary decides on the switching points based on an adaptive strategy. A switch from phase $s<S+1$ to $s+1$ occurs when the player has played $\Nmax=\lceil\frac{K-1}{192\Delta^2}\rceil$ times an arm in $[K-1]$ in phase $s$.
We can see this problem either as a special case of S-switch problem, or alternatively as a contextual bandit problem with $|\Pi_S|=K^{S+1}$ policies.

The lower bound proof for $S$-switch($\Delta$) relies on the following Lemma, which is proven in \cref{app:lower}.
\begin{lemma}
    \label{lem:basic}
    Let an agent interact with a $K-1\geq 2$ armed bandit problem with centered Bernoulli losses and randomized best arm of gap 
    $\Delta \leq \frac{1}{8\sqrt{3}}$
    for an adaptive number of time steps $N$. If the probability of 
    $N\geq \Nmax = \lceil\frac{K-1}{192\Delta^2}\rceil$
    is at least $\frac{1}{2}$, then the regret after $\Nmax$ time-steps conditioned on the event $N\geq \Nmax$ is lower bounded by
    \begin{align*}
        \Reg \geq \frac{\Delta}{4} \Nmax\,.
    \end{align*}
\end{lemma}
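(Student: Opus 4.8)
The plan is to compare the agent's behavior in the true environment, where one of the first $K-1$ arms is secretly optimal with gap $\Delta$, against a reference environment in which all $K-1$ arms have mean exactly $\tfrac12$. In the reference environment the agent learns nothing distinguishing the arms, so by symmetry, among its first $\Nmax$ pulls it cannot concentrate play on the hidden optimum; concretely, for each candidate arm $a$, the expected number of pulls of $a$ under the reference environment is at most $\Nmax/(K-1)$ on average over the randomization of the optimal arm. The regret after $\Nmax$ steps in the true environment is $\Delta\cdot(\Nmax - \mathbb{E}[\text{pulls of }a^*])$, so it suffices to show the agent cannot pull $a^*$ much more than the trivial $\Nmax/(K-1)$ fraction, and in particular that $\mathbb{E}[\text{pulls of }a^*]\le \tfrac12\Nmax$ or so, which then yields $\Reg\ge \tfrac{\Delta}{4}\Nmax$ after accounting for the conditioning on $N\ge\Nmax$.

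The key steps, in order: (i) Fix the reference environment $\mathbb{P}_0$ (all arms mean $\tfrac12$) and, for each fixed arm $a\in[K-1]$, the environment $\mathbb{P}_a$ where $a$ has mean $\tfrac12-\Delta$; the true environment is the mixture $\bar{\mathbb{P}}=\tfrac{1}{K-1}\sum_a \mathbb{P}_a$. (ii) Bound the total-variation (or use a chi-square / KL transport bound on) the first $\Nmax$ rounds of interaction: $\mathrm{TV}(\mathbb{P}_a,\mathbb{P}_0)$ over the observed trajectory of length $\Nmax$ is controlled by the expected number of pulls of arm $a$ times the per-pull divergence, which for centered Bernoullis of gap $\Delta$ is $\cO(\Delta^2)$. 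Summing/averaging over $a$ and using that total pulls is $\Nmax$, the \emph{average} over $a$ of $\mathrm{TV}(\mathbb{P}_a,\mathbb{P}_0)$ is at most something like $\sqrt{\tfrac{1}{K-1}\cdot \Nmax\cdot \Theta(\Delta^2)}$; plugging $\Nmax=\lceil\frac{K-1}{192\Delta^2}\rceil$ makes this a small absolute constant (this is exactly where the constant $192$ is chosen). (iii) Translate: $\mathbb{E}_{\mathbb{P}_a}[\#\text{pulls of }a\text{ in first }\Nmax] \le \mathbb{E}_{\mathbb{P}_0}[\#\text{pulls of }a] + \Nmax\cdot\mathrm{TV}(\mathbb{P}_a,\mathbb{P}_0)$, and average over $a$: the first term averages to $\Nmax/(K-1)\le \Nmax/2$ (since $K-1\ge 2$), the second to a small constant times $\Nmax$, so $\mathbb{E}_{\bar{\mathbb{P}}}[\#\text{pulls of }a^*]\le (\tfrac12+\epsilon)\Nmax$. (iv) Convert to regret: unconditionally $\Reg_{\Nmax}=\Delta(\Nmax-\mathbb{E}[\#\text{pulls of }a^*])\ge \Delta(\tfrac12-\epsilon)\Nmax$; then remove the conditioning on the event $\{N\ge\Nmax\}$ (probability $\ge\tfrac12$) by noting regret is nonnegative on the complement, which costs at most another factor comparable to the event probability, and tune constants so the final bound is $\ge\tfrac{\Delta}{4}\Nmax$. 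The constraint $\Delta\le\frac{1}{8\sqrt3}$ is what guarantees the per-pull KL is well-approximated by $\Theta(\Delta^2)$ with the right constant and that $\Nmax$ is not so large that the linearization of TV breaks.

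The main obstacle I expect is step (ii)–(iii): handling the \emph{adaptive} stopping time $N$ correctly. Because $\Nmax$ is a fixed deterministic cutoff (we only look at the first $\Nmax$ rounds, which occur with probability $\ge\tfrac12$ under the conditioning), the trajectory up to round $\Nmax$ is a well-defined random object and standard divergence-decomposition (chain rule for KL over $\Nmax$ rounds, with the number of pulls of arm $a$ appearing as the multiplicity) applies — but one must be careful that the adversary's switching rule depends on counts of arms in $[K-1]$, so the "length-$\Nmax$ subgame" really is a clean $(K-1)$-armed stochastic bandit with the stated means, and the phase boundary does not leak information. I would state this reduction explicitly at the top of the proof. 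The secondary nuisance is getting the conditioning bookkeeping clean: passing from the unconditional $\mathbb{E}[\Reg_{\Nmax}]$ to $\mathbb{E}[\Reg_{\Nmax}\mid N\ge\Nmax]$ when $\Pr[N\ge\Nmax]\ge\tfrac12$, for which I would just write $\mathbb{E}[\Reg_{\Nmax}]\le \Pr[N\ge\Nmax]\,\mathbb{E}[\Reg_{\Nmax}\mid N\ge\Nmax] + \Pr[N<\Nmax]\cdot \Delta\Nmax$ and rearrange, absorbing the slack into the choice of constants (this is why the clean fraction $\tfrac12$ in the hypothesis degrades to $\tfrac14$ in the conclusion).
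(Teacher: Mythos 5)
Steps (i)--(iii) of your plan track the paper's argument: fix the reference environment $\bbP_0$ where all $K-1$ arms have mean $\tfrac12$, apply divergence-decomposition over the fixed horizon of $\Nmax$ rounds (your ``fill with random actions after stopping'' device is exactly what the paper does to make the first $\Nmax$ rounds a well-defined object), and use that $A^*$ is independent of the interaction under $\bbP_0$, giving $\sum_{t\le\Nmax}\bbP_0(A_t=A^*)=\Nmax/(K-1)$. These pieces are correct and the constant $192$ enters where you say it does.

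The gap is step (iv), the conditioning. You propose to first lower-bound the \emph{unconditional} regret over $\Nmax$ rounds and then pass to the conditional expectation via
\begin{align*}
\E[\Reg_{\Nmax}] \le \bbP(Z)\,\E[\Reg_{\Nmax}\mid Z]+\bbP(Z^c)\,\Delta\Nmax\,,\qquad Z=\{N\ge\Nmax\}\,,
\end{align*}
which rearranges to $\E[\Reg_{\Nmax}\mid Z]\ge \bigl(\E[\Reg_{\Nmax}]-\bbP(Z^c)\Delta\Nmax\bigr)/\bbP(Z)$. But step (iii) only delivers the unconditional bound $\E[\Reg_{\Nmax}]\ge\bigl(1-\tfrac{1}{K-1}-\epsilon\bigr)\Delta\Nmax$. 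When $K-1=2$ this is only $(\tfrac12-\epsilon)\Delta\Nmax$, while $\bbP(Z^c)\Delta\Nmax$ can be as large as $\tfrac12\Delta\Nmax$; the numerator is then negative and the rearrangement is vacuous. This is not a corner case you can ignore: the lemma is invoked with $K=3$, i.e.\ $K-1=2$, in the proofs of \cref{thm:adaptive simple} and \cref{cor:main}. The loss from conditioning is \emph{additive}, not the multiplicative factor your ``$\tfrac12$ in the hypothesis degrades to $\tfrac14$'' intuition assumes, so it cannot simply be absorbed into constants.

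The paper sidesteps this by moving the conditioning \emph{inside} the change of measure. Since $A^*$ is independent of everything under $\bbP_{\cE_0}$, one has $\bbP_{\cE_0}(A_t=A^*\wedge Z)=\bbP_{\cE_0}(Z)/(K-1)$; Pinsker then controls $\bbP_{\cE_1}(A_t=A^*\wedge Z)$ in terms of this plus $\sqrt{2D_{KL}(\bbP_{\cE_0}\|\bbP_{\cE_1})}$, one more Pinsker step converts $\bbP_{\cE_0}(Z)$ to $\bbP_{\cE_1}(Z)$, and dividing by $\bbP_{\cE_1}(Z)\ge\tfrac12$ yields a per-round bound on the \emph{conditional} probability $\bbP_{\cE_1}(A_t=A^*\mid Z)$ for every $t\le\Nmax$. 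Summing over $t$ gives the conditional regret bound directly, without ever passing through a too-weak unconditional bound. To repair your plan, apply the change of measure to the joint event $\{A_t=A^*\}\cap Z$ rather than to the unconditional pull counts.
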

Informally, this Lemma says that conditioned on transitioning from phase $s$ to phase $s+1$, the agent has suffered regret $\Omega(\Delta\Nmax)$ against arm $K$ during phase $s$.
\begin{proof}[Informal proof of \cref{thm:adaptive simple}]  The adversary's strategy is designed in a way such that at each phase $s \in [S]$ it only allows the player's strategy to interact with the environment just enough times to discover the best action $a^*_{s}$. Then a new phase begins to prevent the player from exploiting knowledge of $a^*_s$. This ensures by \cref{lem:basic} that the player suffers regret at least $\Omega(\Delta N_{\max})$ during each completed phase. If an agent proceeds finding $a^*_s$ for all phases $s\in[S]$, then the regret against the non-switching baseline is $\smash{\Reg_{SW}(T,1)=O(\Delta\Nmax S)}$. By the assumption on the maximum regret of $\smash{\Reg_{SW}(T,1)}$ and an appropriate choice of $N_{\max}$ and $\Delta$, we can ensure that the agent must fail to discover all $a^*_s$ with constant probability, thus incurs regret at least $\smash{\Reg_{SW}(T,S)=\Omega(\Delta T)}$ against the optimal $S$-switch baseline. Tuning $\Delta$ and $\Nmax$ yield the desired theorem.
The formal argument with explicit choice of $\Delta$ is found in \cref{app:lower}.
\end{proof}

\subsection{Stochastic lower bound}
\label{sec:stocastic lower bound}
We now present the stochastic environment used for the impossibility results in \cref{thm:stochastic simple,thm: improper}.

There are $k+1$ environments $(\cE_i)_{i=0}^k$ with $\Pi_2=\{\pi_i|i\in[k]\cup\{0\}\}$ policies and $\Pi_1=\{\pi_0\}$.
In all environments, we have $K=3$ and $\pi_0$ always chooses action $3$, while $(\pi_i)_{i=1}^k$ are playing an action from $\{1,2\}$ uniformly at random. (In other words, the context is $\cX=\{1,2\}^k$ with $x_t$ sampled uniformly at random and $\pi_i(x)=x_i$.)

In each environment, the losses of actions $\{1,2\}$ at any time step satisfy $\ell_{t,1}=1-\ell_{t,2}$, which are conditioned on $x_t$ independent Bernoulli random variables,  with mean
\begin{align*}
    \E_{\cE_0}[\ell_{t,1}]=\E_{\cE_0}[\ell_{t,2}]=\frac{1}{2}\,\qquad\mbox{and }\forall i\in[k]:\,\E_{\cE_i}[\ell_{t,\pi_i(x_t)}]=\frac{1}{2}(1-\Delta)\,.
\end{align*}
Action $3$ gives a constant loss of $\frac{1}{2}-\frac{1}{4}\Delta
$ in all environments.

Let us unwrap these definitions. 
Playing either action 1 or action 2, which we call \emph{revealing} actions, yields  \emph{full-information} of all random variables at time $t$ due to the dependence of $\ell_{t,1}=1-\ell_{t,2}$ and the non-randomness of $\ell_{t,3}$. On the other hand, playing action 3 allows only to observe $x_t$, which has the same distribution in all environments, hence there is no information gained at all.

We know from full-information lower bounds that for optimal tuning of the gap, one suffers $\Omega(\sqrt{\ln(k)T})$ regret in the policy class $\Pi_2$, due to the difficulty of identifying the optimal arm.
For a smaller regret in policy class $\Pi_1$, one needs to confirm or reject the hypothesis $\cE_0$ faster than it takes to identify the optimal arm.
Existing techniques do not answer the question whether this is possible, and our main contribution of this section is to show that the hardness of rejecting $\cE_0$ is of the same order as identifying the exact environment.

For the remaining section, it will be useful  to consider a reparametrization of the random variables. Let $z_{t}\in\{0,1\}^k$ be the losses incurred by the policies $(\pi_i)_{i=1}^k$:  $z_{t,i}=\ell_{t,\pi_i(x_t)}$.
We can easily see that $z_t$ together with $x_{t,1}$ is sufficient to uniquely determine $\ell_t$ and $x_t$. Furthermore, $z_t$ is always a vector of independent Bernoulli random variables, which are independent of $x_{t,1}$\footnote{
We want to emphasize that $z_t$ is only independent of $x_{t,1}$, not independent of the full vector $x_t$.
}.
In environments $(\cE_i)_{i=1}^k$, the $i$-th component is a biased Bernoulli, while all other components have mean $\frac{1}{2}$. In $\cE_0$, no component is biased.
As before, $x_{t,1}$ does not provide any information since its distribution conditioned on $z_t$ is identical in all environments (see \cref{lem: indep lem} in \cref{app:lower} for a formal proof).

Under this reparameterization and ignoring non-informative bits of randomness, the problem of distinguishing $\cE_0$ from $\{\cE_i\}_{i=1}^k$ now looks as follows.
For time steps $t=1,\dots,T$, decide whether to play a revealing action and observe $z_t$ (potentially by taking $x_t$ into account). Use observed $(z_{\tau_n})_{n=1}^N$ to distinguish between the environments.
\emph{Proper} algorithms simplify the problem even further, because selecting $\pi_{i_t}$ independently of $x_t$ implies that the decision of observing $z_t$ is also independent of $x_t$ (any policy except $\pi_0$ allows to observe $z_t$ under any context). Hence for proper algorithms, we can reason directly about how many samples $z_t$ are required to distinguish between environments. This problem bears similarity to the property testing of dictator functions~\citep{balcan2012active} and sparse linear regression~\citep{ingster2010detection}\footnote{The setting of \citep{ingster2010detection} is different from our setting as they consider an asymptotic regime where both feature sparsity and dimensionality of the problem go to infinity, while for us the sparsity is fixed to one.}, however, there is no clear way to apply such results to our setting.

The following lemma shows the difficulty of testing for hypothesis $\cE_0$.
\begin{lemma}
\label{lem:sto lower}
Let $\Delta \leq \frac{1}{4}$, $k \geq e^{20}+1$, $N\leq\lfloor\frac{\ln(k-1)}{20\Delta^2}\rfloor$ and $\Delta^2N \geq \frac{1}{2}$. If the algorithm chooses whether to reveal $z_t$ independently of $x_t$ and if the total times $z_t$ is revealed is bounded by $N$ a.s.\, then for any measurable event $E$ it holds that
\begin{align*}
    \min_{i\in[k]}\bbP_{\cE_i}(E)-\bbP_{\cE_0}(E) \leq \frac{17}{\sqrt[4]{k-1}}\leq \frac{1}{4}\,.
\end{align*}
\end{lemma}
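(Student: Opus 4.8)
\textbf{Proof proposal for Lemma~\ref{lem:sto lower}.}

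The plan is to bound the total-variation distance between the law of the observations under $\cE_0$ and under a suitable \emph{mixture} of the alternatives $(\cE_i)_{i=1}^k$, and then exploit the fact that a mixture is much harder to distinguish from $\cE_0$ than any single $\cE_i$. Concretely, since the algorithm's choice of whether to reveal $z_t$ is independent of $x_t$, and $z_t$ is itself independent of $x_{t,1}$ (and the full $x_t$ carries no additional information beyond $x_{t,1}$ and $z_t$, by \cref{lem: indep lem}), the only relevant data is the sequence of revealed loss vectors $(z_{\tau_n})_{n=1}^N$. Under $\cE_0$ each revealed $z_{\tau_n}$ is a uniform point in $\{0,1\}^k$; under $\cE_i$ coordinate $i$ is $\Ber(\tfrac12(1-\Delta))$ biased toward $0$ (i.e. toward $\pi_i$ being a low-loss policy) and the rest are uniform. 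First I would reduce to bounding $\bbP_{\bar{\cE}}(E)-\bbP_{\cE_0}(E)$ where $\bar{\cE}=\frac{1}{k}\sum_{i=1}^k\cE_i$ is the uniform mixture over the hidden coordinate; this dominates $\min_i \bbP_{\cE_i}(E)-\bbP_{\cE_0}(E)$ only after a symmetrization argument, so I would instead use the standard trick that for \emph{any} single $i$, $\bbP_{\cE_i}(E)-\bbP_{\cE_0}(E)\le \TV{\bbP_{\cE_i}-\bbP_{\cE_0}}$, average over $i$, and observe $\frac1k\sum_i\TV{\bbP_{\cE_i}-\bbP_{\cE_0}}$ need not equal $\TV{\bar{\cE}-\cE_0}$ — so the clean route is: $\min_i(\bbP_{\cE_i}(E)-\bbP_{\cE_0}(E))\le \frac1k\sum_i(\bbP_{\cE_i}(E)-\bbP_{\cE_0}(E)) = \bbP_{\bar\cE}(E)-\bbP_{\cE_0}(E)\le\TV{\bbP_{\bar\cE}-\bbP_{\cE_0}}$.

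Next I would bound $\TV{\bbP_{\bar\cE}-\bbP_{\cE_0}}$ by $\tfrac12\sqrt{\chi^2(\bbP_{\bar\cE}\,\|\,\bbP_{\cE_0})}$ (Cauchy--Schwarz / Pinsker-type inequality for $\chi^2$), and then compute the $\chi^2$-divergence of the mixture against the null using the classical ``Ingster trick''. For a fixed realization of the adaptive reveal pattern, the likelihood ratio of $\cE_i$ to $\cE_0$ on $(z_{\tau_n})_{n=1}^N$ is $\prod_{n=1}^N (1-\Delta)^{\bbI\{z_{\tau_n,i}=0\}}(1+\Delta)^{\bbI\{z_{\tau_n,i}=1\}}$, and the $\chi^2$ of the mixture is $\E_{\cE_0}[(\frac1k\sum_i L_i)^2]-1 = \frac{1}{k^2}\sum_{i,j}\big(\E_{\cE_0}[L_iL_j]-1\big)$. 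Cross terms $i\ne j$ vanish because the biased coordinates are distinct and independent under $\cE_0$, so $\E_{\cE_0}[L_iL_j]=1$. The diagonal terms give $\E_{\cE_0}[L_i^2] = \E_{\cE_0}\big[\prod_n (1-\Delta)^{2\bbI\{z_{\tau_n,i}=0\}}(1+\Delta)^{2\bbI\{z_{\tau_n,i}=1\}}\big]$; conditioning successively on the reveal decisions (which are measurable w.r.t.\ the past and independent of the current $z_{\tau_n,i}$ since each $z$-coordinate is $\Ber(1/2)$ under $\cE_0$) each factor contributes $\tfrac12(1-\Delta)^2+\tfrac12(1+\Delta)^2 = 1+\Delta^2$, and since at most $N$ coordinates are revealed, $\E_{\cE_0}[L_i^2]\le (1+\Delta^2)^N\le e^{\Delta^2 N}$. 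Hence $\chi^2(\bbP_{\bar\cE}\|\bbP_{\cE_0})\le \frac{1}{k}(e^{\Delta^2 N}-1)\le \frac{e^{\Delta^2 N}}{k}$.

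Finally, I would plug in the hypotheses. With $N\le \lfloor\frac{\ln(k-1)}{20\Delta^2}\rfloor$ we get $\Delta^2 N\le\frac{1}{20}\ln(k-1)$, so $e^{\Delta^2 N}\le (k-1)^{1/20}$ and $\chi^2\le (k-1)^{1/20}/k\le (k-1)^{1/20-1}=(k-1)^{-19/20}$; this yields $\TV{}\le\tfrac12(k-1)^{-19/40}$, comfortably below $\frac{1}{4}$ given $k\ge e^{20}+1$, and one checks the stated constant $\frac{17}{\sqrt[4]{k-1}}$ is a (loose) upper bound (the slack lets one absorb the mild adaptivity bookkeeping — e.g.\ that $N$ is random but a.s.\ bounded, or a need to symmetrize the ``$+\Delta$ vs $-\Delta$'' sign to keep the coordinates exactly mean-$\tfrac12$ under the mixture, which may be where the extra $\sqrt[4]{\cdot}$ rather than $(\cdot)^{19/40}$ and the factor $17$ come from). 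The condition $\Delta^2 N\ge\frac12$ is presumably used elsewhere (to make the regret lower bound of the theorem meaningful) rather than in this lemma.

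\textbf{Main obstacle.} The delicate point is the handling of \emph{adaptivity}: the reveal times $\tau_n$ and the stopping at $N$ are chosen by the algorithm based on past observations, so the $\chi^2$ computation must be done carefully via a martingale/tower-of-conditioning argument, using that under $\cE_0$ each fresh $z$-coordinate is $\Ber(1/2)$ independent of the (past-measurable) decision to reveal it — this is exactly where the ``reveal independently of $x_t$'' hypothesis plus the independence structure from \cref{lem: indep lem} is essential, since otherwise the algorithm could use $x_t$ to correlate its reveal decisions with the hidden coordinate. A secondary subtlety is that $\bar{\cE}$ is a mixture and not a product measure, so one cannot tensorize $\TV{}$ directly — the Ingster second-moment method is the right tool precisely because it linearizes over the mixture and the cross terms cancel.
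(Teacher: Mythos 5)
Your proposal is correct, and it takes a genuinely different route from the paper. The paper bounds $\|\bbQ_Z-\bbP_Z\|_{TV}$ \emph{directly}: it characterizes the set $\{\bbQ(Z)>\bbP(Z)\}$ as $\{\sum_{h}e^{\kappa n_h} > k(1+\Delta)^{-N}\}$, uses symmetry to replace the mixture by a single $\bbP_{\cE_1}$, and then applies Berry--Esseen to the random variable $\sum_h e^{\kappa n_h}$ under both $\cE_0$ and $\cE_1$ to show the two Gaussian approximations are close, absorbing the Berry--Esseen errors into the $17/\sqrt[4]{k-1}$ constant. You instead use the Ingster second-moment method: bound TV by $\tfrac12\sqrt{\chi^2(\bbQ\|\bbP_{\cE_0})}$, linearize the mixture, kill the off-diagonal terms by a martingale argument, and bound the diagonal by $\E_{\cE_0}[L_i^2]\le(1+\Delta^2)^N$. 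Your route is more modular and standard, yields a quantitatively \emph{stronger} bound $\tfrac12(k-1)^{-19/40}$ (which comfortably implies $17(k-1)^{-1/4}$), and — unlike the Berry--Esseen route — does not use the hypothesis $\Delta^2N\ge\tfrac12$ at all (the paper needs it to lower-bound the variance in the normalization of Berry--Esseen; it is not a structural requirement). Your guess that the $\Delta^2N\ge\tfrac12$ condition is ``used elsewhere'' is therefore only half right: the paper's proof does use it, yours simply renders it unnecessary.

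Two small points worth tightening. First, the remark that ``the full $x_t$ carries no additional information beyond $x_{t,1}$ and $z_t$'' is true as a statement about bijections, but does not by itself dispose of the non-revealing rounds, where $z_t$ is unobserved and $x_t$ alone is seen. The paper handles this via a coupling (pre-sampling independent full-info samples $(Z,X_1)$ and context-only samples $Y$, consuming one stream or the other per round) plus a TV chain rule (\cref{lem:tv_ineqs}). In your framework the same conclusion comes more directly from the likelihood ratio: $x_t$ alone is equidistributed across all environments and the reveal decision is algorithm-determined (and independent of the current $x_t$ by hypothesis), so the Radon--Nikodym derivative $L_i=\prod_{t:\,\text{reveal}}\frac{q_i(z_{t,i})}{1/2}$ picks up nontrivial factors only at revealing rounds — which is exactly where your martingale/tower argument operates, and where properness (reveal $\perp x_t$) is essential. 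It would be worth writing this out rather than leaving it as ``the only relevant data is \dots''. Second, your exponents on $(1\pm\Delta)$ have $\bbI\{z=0\}$ and $\bbI\{z=1\}$ swapped relative to the convention $\E_{\cE_i}[z_{t,i}]=\tfrac12(1-\Delta)$, but since the second-moment computation is symmetric in the two outcomes this has no effect. The paper's slightly weaker $1/4$-power rate is not a consequence of any symmetrization you would need to add; it is simply slack in the Berry--Esseen error terms, which your approach avoids entirely.
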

The proof of Lemma~\ref{lem:sto lower} is deferred to \cref{app:lower_bounts_proper_alg}. The high level idea is to directly bound the TV between $\min_{i\in[k]}\mathbb{P}_{\cE_i}$ and $\mathbb{P}_{\cE_0}$ over the space of outcomes of $(z_{t_n})_{n=1}^N$ by utilizing Berry-Essen's inequality instead of going through Pinsker's inequality. This step is key to achieve a dependence on $k$ in the bound.

For readers familiar with lower bound proofs for bandits and full-information, this Lemma should not come at a huge surprise.
For a $T$-round full-information game, it tells us that we can bias a single arm up to $\Delta = \Omega(\sqrt{\ln(k)/T})$, without this being detectable.
This directly recovers the well known lower bound of $\Delta T= \Omega(\sqrt{\ln(k)T})$ for full-information via the argument used for bandit lower bounds.
However, this result goes beyond what is known in the literature. We not only show that one cannot reliably detect the biased arm, but that one cannot even reliably detect whether any biased arm is present at all.
This property is the key to showing the lower bound of \cref{thm:stochastic simple}.

\begin{proof}[Informal proof of \cref{thm:stochastic simple}]
Under environment $\cE_0$, observing $z_t$ for $n$ time-steps incurs a regret of $\Reg_{\cE_0}(T,\Pi_1)=\Omega(\Delta n)$.
Using the assumption on the regret $\Reg_{\cE_0}(T,\Pi_1)$ and Markov inequality, we obtain an upper bound $N$ on the expected number of observations, which holds with probability $\frac{1}{2}$. 
We can construct an algorithm $\underline{\cA}$ that never observes more than $N$ samples, by following algorithm $\cA$ until it played $N$ times a revealing action and then commits to policy $\pi_0$ (action 3).
Since the algorithm $\underline{\cA}$ is proper, we can define $Z=(z_{\tau_i})_{i=1}^N$ as the observed $z$'s during time $\tau_i$ where the algorithm plays a revealing action.
For the revealed information generated by $\underline \cA$, we tune the remaining parameters such that the conditions of Lemma~\ref{lem:sto lower} are satisfied. Let $E$ be the event that $\underline{\cA}$ plays exactly $N$ times a revealing action (i.e. $\cA$ plays at least $N$ time the revealing action), then $E$ happens with probability $1-\Omega(1)$ under $\min_{i\in[k]} \mathbb{P}_{\cE_i}(E)$.
Thus, there exists an environment $i\in[k]$ such that $\cA$ plays less than $N$ times an action in $\{1,2\}$ with constant probability, which incurs regret of $\Reg_{\cE_i}(T,\Pi_2)=\Omega(\Delta T)$.
The theorem follows from tuning $\Delta$ and $N$, which is done formally in \cref{app:lower}.
\end{proof}

\paragraph{Improper algorithms.}
Even though we are not able to extend the lower bound proof uniformly over all values $\C$ and $k$ to improper algorithms, we can still show that no algorithm (proper or improper) can solve the open problem \eqref{eq:open} for stochastic environments.

The key is the following generalization of \cref{lem:sto lower}, which is proven in the appendix.
\begin{lemma}
\label{lem:sto lower improper}
Let $\Delta \leq \frac{1}{4}$, $k \geq e^{20}+1$, $N\leq\lfloor\frac{\ln(k-1)}{20\Delta^2}\rfloor$ and $\Delta^2N \geq \frac{1}{2}$. If the total number of times $z_t$ is revealed is bounded by $N$ a.s.\, then for any measurable event $E$ it holds that
\begin{align*}
    \min_{i\in[k]}\bbP_{\cE_i}(E)-\bbP_{\cE_0}(E) \leq \frac{17T^N}{\sqrt[4]{k-1}}\,.
\end{align*}
This holds even if the agent can take all contexts $(x_t)_{t=1}^T$ and previous observations into account when deciding whether to pick a revealing action at any time-step.
\end{lemma}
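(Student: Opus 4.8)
The plan is to reduce the improper setting of \cref{lem:sto lower improper} to the proper setting of \cref{lem:sto lower} by a union bound over the at most $T^N$ possible sets of reveal times; this union bound is exactly what costs the extra factor $T^N$. I would keep the reparametrization used before \cref{lem:sto lower}: the agent's transcript is a deterministic function of the contexts $(x_t)_{t=1}^T$, its internal coins $U$, and the loss bits $z_{\tau_n}$ observed at the adaptively chosen reveal times $\tau_1 < \dots < \tau_N$ (at most $N$ of them a.s.); the $(z_t)_{t=1}^T$ are i.i.d.\ product-of-Bernoulli vectors whose law depends on the environment only through a single tilted coordinate (none in $\cE_0$, coordinate $i$ in $\cE_i$); and, via the deterministic relation $x_t\leftrightarrow(z_t,x_{t,1})$ together with \cref{lem: indep lem}, the conditional law of $x_t$ given $z_t$ is the same in every environment.

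First I would partition the sample space by the realized reveal pattern: for an increasing tuple $\mathbf{t} = (t_1 < \dots < t_{N'})$ with $N' \le N$, let $G_{\mathbf{t}}$ be the event that the reveal times are exactly $\mathbf{t}$, so that $\bbP_{\cE}(E) = \sum_{\mathbf{t}} \bbP_{\cE}(E \cap G_{\mathbf{t}})$ in every environment. Using $\min_i \bbP_{\cE_i}(E) - \bbP_{\cE_0}(E) \le \tfrac1k \sum_{i} \bbP_{\cE_i}(E) - \bbP_{\cE_0}(E)$, it then suffices to establish, for each fixed $\mathbf{t}$,
\begin{align*}
    \frac1k\sum_{i\in[k]}\bbP_{\cE_i}(E\cap G_{\mathbf{t}}) - \bbP_{\cE_0}(E\cap G_{\mathbf{t}}) \le \frac{17}{\sqrt[4]{k-1}}\,,
\end{align*}
and then sum over the at most $T^N$ patterns.

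The core step is to show that, on $G_{\mathbf{t}}$, the only environment-sensitive object is $Z_{\mathbf{t}} := (z_{t_1}, \dots, z_{t_{N'}})$. I would check two things. (i) $E \cap G_{\mathbf{t}}$ is measurable with respect to $(x_{1:T}, U, Z_{\mathbf{t}})$: on $G_{\mathbf{t}}$ the reveal decision at each time depends only on the earlier contexts, the coins, and the $z$'s revealed strictly before, all of which are coordinates of $(x_{1:T}, U, Z_{\mathbf{t}})$, and the entire transcript on $G_{\mathbf{t}}$ is a fixed function of these. (ii) The conditional law of $(x_{1:T}, U)$ given $Z_{\mathbf{t}}$ is the same in all environments: $U$ and the contexts $x_s$ for $s \notin \mathbf{t}$ are environment-independent and independent of $Z_{\mathbf{t}}$ since distinct rounds are independent, while each $x_{t_n}$, given $z_{t_n}$, is conditionally independent of everything else with an environment-independent kernel by the deterministic relation $x_t\leftrightarrow(z_t,x_{t,1})$ and \cref{lem: indep lem}. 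Combining (i) and (ii) gives $\bbP_{\cE}(E \cap G_{\mathbf{t}}) = \E_{\cE}[\phi_{\mathbf{t}}(Z_{\mathbf{t}})]$ for a fixed $[0,1]$-valued $\phi_{\mathbf{t}}$ not depending on $\cE$; hence the averaged difference above equals $\int \phi_{\mathbf{t}}\, d\big(\tfrac1k\sum_i \mathrm{Law}_{\cE_i}(Z_{\mathbf{t}}) - \mathrm{Law}_{\cE_0}(Z_{\mathbf{t}})\big)$, which is at most the total variation between those two laws. Since $Z_{\mathbf{t}}$ is $N' \le N$ i.i.d.\ copies of the product-of-Bernoulli vector, the data-processing inequality bounds this total variation by its $N$-sample version, which is exactly the quantity controlled by $\tfrac{17}{\sqrt[4]{k-1}}$ inside the proof of \cref{lem:sto lower} via Berry--Esseen. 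Summing over the $\le T^N$ patterns yields the lemma.

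The main obstacle is step (ii): making rigorous that conditioning on $Z_{\mathbf{t}}$ completely decouples the environment from the observed contexts, even though the agent sees all of $(x_t)_t$ and $z_t$ and $x_t$ are dependent within a round — this is precisely where \cref{lem:sto lower} used properness (there the reveal decisions ignore the current context, which removes the need to enumerate patterns), and dropping that assumption forces the union bound whose crude size $T^N$ is the price we pay. The remaining fiddly points are the accounting for reveal patterns of length strictly below $N$ (absorbed by the data-processing step) and matching the hypotheses $\Delta \le \tfrac14,\ k\ge e^{20}+1,\ N \le \lfloor \ln(k-1)/(20\Delta^2)\rfloor,\ \Delta^2 N \ge \tfrac12$ to the Berry--Esseen estimate imported from \cref{lem:sto lower}.
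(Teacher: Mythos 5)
Your proof is correct and takes essentially the same route as the paper: partition by the realized reveal pattern (of which there are at most on the order of $T^N$), show that within each fixed pattern the algorithm's behavior is an environment-independent function of the revealed $z$'s (using the decoupling of $x_t$ from $z_t$ given $x_{t,1}$ from \cref{lem: indep lem}), and invoke the Berry--Esseen total-variation estimate from the proof of \cref{lem:sto lower} term by term. The paper packages the same idea more compactly by writing out $\TVQP{X,\cT,Z(\cT)}$ as a sum over $M$ and cancelling the factors $\bbQ(\cT=M\,\vert\,Z(M),X)=\bbP(\cT=M\,\vert\,Z(M),X)$, whereas you phrase it through the fixed conditional-expectation function $\phi_{\mathbf{t}}$; these are two presentations of the identical argument.
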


\begin{proof}[Informal proof of \cref{thm: improper}]
The proof is analogous to \cref{thm:stochastic simple}, however we use \cref{lem:sto lower improper} to bound the difference in probability of $E$ under $\cE_0$ and $\cE_i$. 
The key is to find a tuning such that the RHS of \cref{lem:sto lower improper} does not exceed $\frac{1}{4}$. Note that this is of order $\exp(\cO(\ln(T)N)-\Omega(\ln(k)) )$.
Let $\Delta = \Theta(1)$, then the requirement on $\Reg_{\cE_0}(T,\Pi_1)$ yields $N=\cO(T^\alpha)$. Setting $k=\Theta(\exp(T^{\alpha+\epsilon}))$ for any $\epsilon\in(0,1-\alpha)$,   then it follows immediately that the RHS in \cref{lem:sto lower improper} goes to $0$ for $T\rightarrow\infty$. Following the same arguments as in \cref{thm:stochastic simple}, there exists a sufficiently large $T$ up from which there always exists an environment $(\cE_i)_{i\in[k]}$ such that the regret is linear in $T$, thereby contradicting the open problem. The formal proof is deferred to \cref{app:lower}
\end{proof}

\section{Implications}
The relevance of open problem \cref{eq:open} has been motivated by its potential implications for other problems such as the S-switch bandit problem and an unresolved COLT2016 open problem on improved second order bounds for full-information.
Our negative result for \cref{eq:open} indeed lead to the expected insights.

\subsection{S-switch}
\label{sec:s-switch_comment}
Our lower bound in the \emph{adaptive} regime shows that adapting to the number of switches is hopeless if the timing of the switch points is not independent of the players actions.
Any algorithm adaptive to the number of switches in the regime with \emph{oblivious} adversary must break in the \emph{adaptive} case, which rules out bandit over bandit approaches based on importance sampling \citep{agarwal2016corralling}.
The successful algorithm proposed in \citet{cheung2019learning} is using a bandit over bandit approach without importance sampling. Nonetheless, all components have adaptive adversarial guarantees. 
The algorithm splits the time horizon into equal intervals of length $L$.
It initializes EXP3 with $\ln(T)$ arms, corresponding to a grid of learning rates.
For each epoch, the EXP3 top algorithm samples an arm and starts a freshly initialized instance of EXP3.S using the learning rate corresponding to the selected arm. This instance is run over the full epoch of length $L$. It collects the accumulated losses $L_{sum}=\sum_{t=1}^L\ell_t$ of the algorithm and feeds the loss $L_{sum}/L$ to the EXP3 top algorithm.

If all algorithms in the protocol enjoy guarantees against adaptive adversaries, why do bandit over bandit break against adaptive adversaries?
Adaptive adversaries are assumed to pick the losses $\ell_t$ independent of the choice of arm $A_t$ of the agent at round $t$. In the bandit over bandit protocol, the loss of the arm of the top algorithm dependents on the losses that the selected base suffers in the epoch. An adaptive adversary can adapt the losses in the epoch based on the actions of the base algorithm, that means the loss $\ell_t$ is not chosen independent of the action $A_t$. Hence the protocol is broken and the adaptive adversarial regret bounds do not hold.

\subsection{Second order bounds for full information.}
In an unresolved COLT2016 open problem, \citet{F16open} asks if it is possible to ensure a regret bound of order
\begin{align}
\label{eq:yoav}
    \Reg_{\varepsilon}=\tilde\cO\left(\sqrt{\sum_{t=1}^T\bbV_{i\sim p_t}[\ell_{t,i}] \ln(\frac{1}{\varepsilon})}\right)\,,
\end{align}
against the best $\varepsilon$ proportion of policies simultaneously for all $\varepsilon$. 
We go even a step further and show that the lower bound construction from \cref{sec:stocastic lower bound} directly provides a negative answer for any $\alpha<1$ to the looser bound
\begin{align}
\label{eq:yoav harder}
    \Reg_{\varepsilon}=\tilde\cO\left(\sqrt{\sum_{t=1}^T\left(\bbV_{i\sim p_t}[\ell_{t,i}]+T^{\alpha}\right) \ln(\frac{1}{\varepsilon})}\right)\,.
\end{align}

\begin{theorem}
\label{thm:yoav contradiction}
An algorithm satisfying \cref{eq:yoav harder} for $\alpha<1$ implies the existence of a proper algorithm that violates the lower bound for the counter example in the proof of \cref{thm:stochastic simple}. 
\end{theorem}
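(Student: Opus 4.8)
The plan is to run the hypothetical algorithm $\mathcal{A}$ satisfying \cref{eq:yoav harder} on the full-information game that is naturally embedded in the stochastic construction of \cref{sec:stocastic lower bound}, and show that it yields a proper contextual-bandit algorithm with regret bounds contradicting the counterexample. Recall that in that construction, playing a revealing action (action $1$ or $2$) reveals $z_t\in\{0,1\}^k$, i.e.\ the losses of all policies $\pi_1,\dots,\pi_k$ at time $t$, and the loss of $\pi_0$ is the deterministic constant $\tfrac12-\tfrac14\Delta$. So a proper contextual-bandit algorithm that always plays a revealing action is exactly a full-information algorithm over the $k+1$ experts $\{\pi_0,\pi_1,\dots,\pi_k\}$. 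I would let $p_t\in\Delta^{k}$ be the distribution $\mathcal{A}$ maintains over these experts and, at each round, play the revealing action dictated by the sampled expert (if $\pi_0$ is sampled, we cannot reveal — I address this below).

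The key computation is to bound the variance term $\bbV_{i\sim p_t}[\ell_{t,i}]$ that appears in \cref{eq:yoav harder}. Since $\ell_{t,\pi_0}$ is a constant and $\ell_{t,\pi_i}\in\{0,1\}$, we have $\bbV_{i\sim p_t}[\ell_{t,i}]\le \tfrac14$ trivially, so $\sum_{t=1}^T \bbV_{i\sim p_t}[\ell_{t,i}] \le T/4$. But I want to do better on the instance that the lower bound uses. On environment $\mathcal{E}_0$, every $\ell_{t,\pi_i}$ for $i\ge 1$ is an unbiased coin, and they are \emph{negatively correlated in pairs only through $x_t$} but marginally each has variance $\tfrac14$; the point is that an algorithm with small regret to $\pi_0$ on $\mathcal{E}_0$ must put most of its mass on $\pi_0$, driving $\bbV_{i\sim p_t}[\ell_{t,i}]$ down. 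Concretely, if $p_{t,0}\ge 1-q_t$, then $\bbV_{i\sim p_t}[\ell_{t,i}] = \cO(q_t)$, and $\sum_t q_t$ is controlled by the regret of $\mathcal{A}$ against $\pi_0$ on $\mathcal{E}_0$ (each unit of mass off $\pi_0$ costs $\Theta(\Delta)$ regret in expectation since $\pi_0$ is optimal by a $\tfrac14\Delta$ margin on $\mathcal{E}_0$). Feeding this into \cref{eq:yoav harder} with $\varepsilon=\Theta(1)$ (so $\ln(1/\varepsilon)=\Theta(1)$) gives $\Reg_{\mathcal{E}_0}(T,\Pi_1) = \tilde\cO\big(\sqrt{(\sum_t q_t + T^{\alpha})}\big)$; combined with $\sum_t q_t = \cO(\Reg_{\mathcal{E}_0}(T,\Pi_1)/\Delta)$ this is a self-bounding inequality that resolves (for $\Delta=\Theta(1)$) to $\Reg_{\mathcal{E}_0}(T,\Pi_1) = \tilde\cO(T^{\alpha/2})$, well below the $\C\sqrt{T}$ threshold when $\alpha<1$. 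On the other hand, taking $\varepsilon = \Theta(1/k)$ so that the best $\lfloor\varepsilon k\rfloor = \Theta(1)$ experts include the biased $\pi_{i^*}$ on environment $\mathcal{E}_{i^*}$, \cref{eq:yoav harder} gives $\Reg_{\mathcal{E}_{i^*}}(T,\Pi_2) = \tilde\cO(\sqrt{(T/4 + T^{\alpha})\ln k}) = \tilde\cO(\sqrt{T\ln k})$, so it cannot be linear in $T$ unless $\ln k = \Omega(T)$ — but the counterexample in \cref{thm:stochastic simple} forces exactly $\ln|\Pi_2| = \Theta(T^{\gamma})$ growing slower than $T$ while still demanding $\Reg(T,\Pi_2)=\tilde\Omega(T)$, a contradiction. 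I would set the parameters $\Delta$, $k$, $T$ exactly as in the proof of \cref{thm:stochastic simple} (as imported via \cref{app:lower}) so that the regret lower bounds there apply verbatim to the proper algorithm we have built.

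The main obstacle is the handling of action $\pi_0$: when $\mathcal{A}$ samples $\pi_0$, the corresponding contextual-bandit action is $3$, which is \emph{not} revealing, so the full-information feedback $z_t$ is not observed and $\mathcal{A}$ cannot be fed its loss vector. I would resolve this exactly as in the informal proof of \cref{thm:stochastic simple}: first pass to the truncated algorithm $\underline{\mathcal{A}}$ that simulates $\mathcal{A}$ but, whenever $\mathcal{A}$ would play a revealing action we genuinely play it and observe $z_t$; whenever $\mathcal{A}$ plays $\pi_0$ we play action $3$ and simply do not update $\mathcal{A}$ (or, equivalently, observe that on $\mathcal{E}_0$ vs $\mathcal{E}_i$ the distribution of $z_t$ we would have seen is irrelevant once we are committed to $\pi_0$); and after the revealing-action budget $N$ is exhausted we commit to $\pi_0$ forever. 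The delicate point is that $\mathcal{A}$'s guarantee \cref{eq:yoav harder} is stated for a genuine full-information game, so I must argue that the simulated game presented to $\mathcal{A}$ is a legitimate full-information instance — this works because on every round where we query feedback we do get the \emph{entire} loss vector over $\{\pi_0,\dots,\pi_k\}$ (the loss of $\pi_0$ is the known constant and the losses of $\pi_1,\dots,\pi_k$ are read off from $z_t$), and on rounds where we skip, we may simply feed $\mathcal{A}$ an arbitrary fixed loss vector or freeze it; either way the rounds on which $\mathcal{A}$ played a revealing action form a valid, though adaptively-chosen, sub-game and the variance/regret bookkeeping above only ever uses those rounds. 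Once this reduction is in place, the counting argument of \cref{thm:stochastic simple} — that a proper algorithm observing at most $N$ revealing samples cannot distinguish $\mathcal{E}_0$ from $\min_{i}\mathcal{E}_i$ — applies, and the two instantiations of \cref{eq:yoav harder} (small $\varepsilon$ and $\varepsilon=\Theta(1/k)$) straddle the impossible Pareto point, completing the contradiction.
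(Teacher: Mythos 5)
You have correctly identified the right lower-bound construction and the right high-level structure: feed the hypothetical algorithm $\cA$ of \cref{eq:yoav harder} the full-information game embedded in the stochastic construction, derive a proper contextual-bandit wrapper with small regret to $\pi_0$, and contradict the lower bound of \cref{thm:stochastic simple}. However, there is a genuine gap at exactly the point you flag as ``the main obstacle,'' and your proposed workarounds do not close it.

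The issue is the conversion from bandit feedback to full-information feedback. When the sampled expert is $\pi_0$, you play action $3$ and do not observe $z_t$, so you cannot hand $\cA$ its loss vector. You propose to ``simply not update $\cA$'' or to ``feed $\cA$ an arbitrary fixed loss vector''; neither preserves the guarantee \cref{eq:yoav harder}. Skipping rounds changes the horizon $\cA$ believes it is playing over and makes the sub-game length a random quantity correlated with $\cA$'s own randomness (if $p_{t,0}$ is close to $1$, $\cA$ may be starved of feedback for essentially all $T$ rounds); feeding fake losses makes the loss sequence depend on $\cA$'s realized samples, which is outside the full-information model and invalidates \cref{eq:yoav harder}. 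There is no clean way to argue that ``the rounds where $\cA$ played a revealing action form a valid sub-game'' because $\cA$'s state transition and its regret guarantee are both defined over the full sequence it observes, and that sequence is now adaptively chosen by $\cA$ itself. The paper's solution is to avoid this problem entirely: the wrapper feeds $\cA$ an \emph{importance-weighted} loss estimator $\hat\ell_t\cdot\gamma/3$ on every round, together with forced uniform exploration at rate $\gamma=3T^{-1/2+\alpha/2}$. From $\cA$'s perspective it is then playing a genuine $T$-round full-information game (with noisy but bounded-in-$[0,1]$ losses), so \cref{eq:yoav harder} applies verbatim; and the key calculation is that $\E[\bbV_{\pi\sim p_t}[\hat\ell_{t,\pi}\gamma/3]]\le\tfrac{2}{3}\gamma^2$, so the total variance $\sum_t\bbV$ is $\Theta(\gamma^2T)=\Theta(T^\alpha)$, which is \emph{exactly} absorbed by the $T^\alpha$ slack in \cref{eq:yoav harder}. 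Your attempt to bound $\bbV_{i\sim p_t}[\ell_{t,i}]$ directly via $p_{t,0}\ge 1-q_t$ and a self-bounding argument does not substitute for this: even granting it, you still have no legitimate feedback to give $\cA$ on the $\pi_0$-rounds. Relatedly, you take $\varepsilon=\Theta(1)$ when invoking \cref{eq:yoav harder} against $\pi_0$, but $\pi_0$ is a single expert among $k+1$, so without modification $\lfloor\varepsilon(k+1)\rfloor\ge1$ forces $\varepsilon=\Theta(1/k)$ and $\ln(1/\varepsilon)=\Theta(\ln k)$, which ruins the $\Pi_1$ bound. The paper fixes this by \emph{duplicating $\pi_0$ into $k$ copies} inside $\cA$'s expert set, so that the top half of all experts are copies of $\pi_0$ and $\varepsilon=1/2$ becomes legitimate; you omit this device. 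With those two ingredients in place (importance-weighted feedback with $\gamma$-exploration, and the $k$-copy padding), the paper obtains $\Reg(T,\Pi_1)=\tilde\cO(T^{(1+\alpha)/2})$, identifies $\C=\tilde\Theta(T^{\alpha/2})$, and applies \cref{thm:stochastic simple} to get a contradiction when $\ln|\Pi_2|=\Omega(T^{(1+\alpha)/2})$.
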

Theorem~\ref{thm:yoav contradiction} has the following interpretation. For any fixed $\alpha \in[0,1)$, there is no algorithm which enjoys a regret upper bound as in Equation~\ref{eq:yoav} for all problem instances s.t. $\sum_{t=1}^T\left(\bbV_{i\sim p_t}[\ell_{t,i}]\right) = \Theta(T^{\alpha})$. This implies we can not hope for a polynomial improvement, in terms of time horizon, over the existing bound of $\tilde O(\sqrt{T\ln(1/\epsilon)})$.
The detailed proof is found in \cref{app:lower}.
The high level idea is to initialize the full-information algorithm satisfying \cref{eq:yoav harder} with a sufficient number of copies of the baseline policy $\pi_0$ and to feed importance weighted losses of the experts (i.e. policies) to that algorithm.

As we mention in Section~\ref{sec:intro}, the case $\alpha=1$ is obtainable. 
Our reduction relates the adaptation to variance to the model selection problem. As in \cref{eq:open}, $\alpha$ is the trade-off between time and complexity. An algorithm satisfying \cref{eq:yoav harder} with $\alpha=1$ merely allows to recover the trivial $\cO(T)$ bound for model selection, and hence does not lead to a contradiction.

\section{Conclusion}
We derived the Pareto Frontier of minimax regret for model selection in Contextual bandits.
Our results have resolved several open problems \citep{FKL20,F16open}.

\begin{ack}
We like to thank Haipeng Luo and Yoav Freund for discussions about our lower bound proofs. We thank Tor Lattimore for pointing us to the technicalities required for bounding the total variation of improper algorithms.
\end{ack}


\bibliographystyle{plainnat}
\bibliography{mybib}
\newpage

\appendix

\section*{Checklist}

\begin{enumerate}
\item For all authors...
\begin{enumerate}
  \item Do the main claims made in the abstract and introduction accurately reflect the paper's contributions and scope?
    \answerYes{Our main regret upper bounds are found in Theorem~\ref{thm:upper main} and Theorem~\ref{thm:upper general}. Our main lower bounds are found in Theorem~\ref{thm:adaptive simple} and Theorem~\ref{thm:stochastic simple} with Corollary~\ref{cor:main} solving the COLT2020 open problem and Theorem~\ref{thm:yoav contradiction} solving the COLT2016 open problem.}
  \item Did you describe the limitations of your work?
    \answerYes{See paragraph regarding removing the properness requirement at the end of page 8.}
  \item Did you discuss any potential negative societal impacts of your work?
    \answerNA{This paper is theoretical in nature and we do not foresee any immediate societal impacts.}
  \item Have you read the ethics review guidelines and ensured that your paper conforms to them?
    \answerYes{}
\end{enumerate}

\item If you are including theoretical results...
\begin{enumerate}
  \item Did you state the full set of assumptions of all theoretical results?
    \answerYes{}
	\item Did you include complete proofs of all theoretical results?
    \answerYes{For complete proofs we refer the reader to the appendix.}
\end{enumerate}

\item If you ran experiments...
\begin{enumerate}
  \item Did you include the code, data, and instructions needed to reproduce the main experimental results (either in the supplemental material or as a URL)?
    \answerNA{}
  \item Did you specify all the training details (e.g., data splits, hyperparameters, how they were chosen)?
    \answerNA{}
	\item Did you report error bars (e.g., with respect to the random seed after running experiments multiple times)?
    \answerNA{}
	\item Did you include the total amount of compute and the type of resources used (e.g., type of GPUs, internal cluster, or cloud provider)?
    \answerNA{}
\end{enumerate}

\item If you are using existing assets (e.g., code, data, models) or curating/releasing new assets...
\begin{enumerate}
  \item If your work uses existing assets, did you cite the creators?
    \answerNA{}
  \item Did you mention the license of the assets?
    \answerNA{}
  \item Did you include any new assets either in the supplemental material or as a URL?
    \answerNA{}
  \item Did you discuss whether and how consent was obtained from people whose data you're using/curating?
    \answerNA{}
  \item Did you discuss whether the data you are using/curating contains personally identifiable information or offensive content?
    \answerNA{}
\end{enumerate}

\item If you used crowdsourcing or conducted research with human subjects...
\begin{enumerate}
  \item Did you include the full text of instructions given to participants and screenshots, if applicable?
    \answerNA{}
  \item Did you describe any potential participant risks, with links to Institutional Review Board (IRB) approvals, if applicable?
    \answerNA{}
  \item Did you include the estimated hourly wage paid to participants and the total amount spent on participant compensation?
    \answerNA{}
\end{enumerate}

\end{enumerate}

\section{Upper bound proofs}
\label{app:upper}
\begin{proof}[Proof of \cref{thm:upper general}]
Denote with $m_t$ the arm that algorithm $m$ would choose if it would be selected during round $t$. We decompose the regret into
\begin{align*}
    \Reg(T,\Pi_m) &= \max_{\pi \in \Pi_m}\E\left[\sum_{t=1}^T \ell_{t,A_t} - \ell_{t,\pi(x_t)}\right]\\
    &=\max_{\pi \in \Pi_m}\E\left[\sum_{t=1}^T \ell_{t,A_t} -\ell_{t,m_t}+\ell_{t,m_t}- \ell_{t,\pi(x_t)}\right]\\
    &=\E\left[\sum_{t=1}^T \ell_{t,A_t} -\ell_{t,m_t}\right]+\max_{\pi \in \Pi_m}\E\left[\sum_{t=1}^T\frac{\bbI(M_t=m)}{q_{t,m}}(\ell_{t,m_t}- \ell_{t,\pi(x_t)})\right]\\
    &\leq\E\left[\sum_{t=1}^T \ell_{t,A_t} -\ell_{t,m_t}\right]+\E[\sqrt{\rho_{T,m}}]\sqrt{\C_m T}\,,
\end{align*}
where the last line is by the assumption of the theorem.
The first term requires some basic properties of FTRL analysis, see e.g. \citep{zimmert2021tsallis}. 
Define $\hat L_t=\sum_{s=1}^t\hat\ell_t$, $\tilde B_t=\sum_{s=1}^tb_t$. 
For Tsallis-INF with constant learning rate\footnote{The proof can be adapted to time dependent learning rates.} $\eta = \frac{1}{\sqrt{T}}$, we have the following properties
\begin{align*}
    F(q) &= -2\sum_{i=1}^M \sqrt{q_i}\\
    \bar F^*(-L) &= \max_{q\in\Delta([M])} \ip{q,-L}-\eta^{-1}F(q)\\
    q_t &= \nabla \bar F^*(-(\hat L_{t-1}-\tilde B_{t-1}))\,.
\end{align*}
The standard FTRL proof (e.g.~\citet{zimmert2021tsallis}) shows that
\begin{align*}
    &\forall t: \E_{M_t\sim q_t}[D_{\bar F^*}(-(\hat L_t-\tilde B_{t-1}),-(\hat L_{t-1}-\tilde B_{t-1}) )]\leq \eta\sqrt{K}.
\end{align*}

The first term is
\begin{align*}
    \E\left[\sum_{t=1}^T \ell_{t,A_t} -\ell_{t,m_t}\right] &= 
    \E\left[\sum_{t=1}^T \ip{q_t-\mathbf{e}_{m}, \hat\ell_t}\right]\\
    &= 
    \E\bigg[\sum_{t=1}^T \bar F^*(-(\hat L_{t-1}-\tilde B_{t-1}))-\bar F^*(-(\hat L_{t}-\tilde B_{t-1}))\\
    &+D_{\bar F^*}(-(\hat L_t-\tilde B_{t-1}),-(\hat L_{t-1}-\tilde B_{t-1}) )\bigg]-\E\left[\hat L_{T,m}\right].
\end{align*}
Let us consider the terms $\bar F^*(-(\hat L_{t-1}-\tilde B_{t-1}))$ and $\bar F^*(-(\hat L_{t}-\tilde B_{t-1}))$. First, using the definition of the conjugate function and $q_{t}$ we know that 
\begin{align*}
    \bar F^*(-(\hat L_{t-1} - \tilde B_{t-1})) + \bar F(q_t) = \langle q_t, -(\hat L_{t-1} - \tilde B_{t-1})\rangle.
\end{align*}
Further by Young's inequality it holds that 
\begin{align*}
    \bar F^*(-(\hat L_{t}-\tilde B_{t-1})) + \bar F(q_{t+1}) &\geq \langle q_{t+1},-(\hat L_{t}-\tilde B_{t-1}) \rangle \implies\\
    -\bar F^*(-(\hat L_{t}-\tilde B_{t-1})) &\leq \bar F(q_{t+1}) + \langle q_{t+1},\hat L_{t}-\tilde B_{t-1} \rangle
\end{align*}
The above two displays imply
\begin{align*}
     \bar F^*(-(\hat L_{t} - \tilde B_{t})) - \bar F^*(-(\hat L_{t}-\tilde B_{t-1})) &\leq \bar F(q_{t+1}) + \langle q_{t+1},\hat L_{t}-\tilde B_{t-1} \rangle\\
     &\qquad- \bar F(q_{t+1}) - \langle q_{t+1}, \hat L_{t} - \tilde B_{t}\rangle\\
     &= \langle q_{t+1}, b_t \rangle.
\end{align*}
Thus we can bound 
\begin{align*}
    \sum_{t=1}^T \bar F^*(-(\hat L_{t-1}-\tilde B_{t-1}))-\bar F^*(-(\hat L_{t}-\tilde B_{t-1})) &\leq \bar F^*(0) - \bar F^*(-(\hat L_T - B_{T-1})) + \sum_{t=1}^{T-1} \langle q_{t+1}, b_t \rangle\\
    &\leq \bar F^*(0) - \bar F(\mathbf{e}_{m}) + \langle \mathbf{e}_{m}, \hat L_T \rangle - \langle \mathbf{e}_{m}, \tilde B_{T-1} \rangle\\
    &\qquad+\sum_{t=1}^{T-1} \langle q_{t+1}, b_t \rangle\\
    &\leq \bar F^*(0) - \bar F(\mathbf{e}_{m}) + \langle \mathbf{e}_{m}, \hat L_T \rangle - \sqrt{\rho_{T,m}}R_m\\
    &\qquad+\sqrt{\rho_{1,m}}R_m + \sum_{t=1}^{T-1} \langle q_{t+1}, b_t \rangle\\
    &\leq \frac{\sqrt{M}}{\eta} + \langle \mathbf{e}_{m}, \hat L_T \rangle - \sqrt{\rho_{T,m}}R_m\\
    &\qquad+\sqrt{\rho_{1,m}}R_m + \sum_{t=1}^{T-1} \langle q_{t+1}, b_t \rangle
    .
\end{align*}
Taking expectation and setting $\eta$ appropriately we have that
\begin{align*}
    \E\left[\sum_{t=1}^T \ell_{t,A_t} -\ell_{t,m_t}\right] \leq 2\sqrt{2MT}+\E\left[\sum_{t=1}^T \ip{q_{t+1}, b_t}\right]-\left(\E[\sqrt{\rho_{T,m}}]-\frac{\sqrt{M\C_m}}{\C}\right)\sqrt{\C_m T}.
\end{align*}

Finally the final term is
\begin{align*}
    \sum_{t=1}^T \ip{q_{t+1}, b_t} &= \sum_{i=1}^M\sum_{t=1}^T \rho_{t+1,i}^{-1}(\sqrt{\rho_{t+1,i}}-\sqrt{\rho_{t,i}})R_i\\
    &\leq \sum_{i=1}^M\int_{\sqrt{\beta_i}}^\infty q^{-1}\,dq\,R_i\\
    &= \sum_{i=1}^M \frac{R_i}{\sqrt{\beta_i}}=\C\sqrt{MT}\,.
\end{align*}
Putting all the bounds together finishes the proof.
\end{proof}
\section{Lower bounds}
\label{app:lower}

\subsection{Adaptive adversary lower bound and proof of Theorem~\ref{thm:adaptive simple}}

\begin{proof}[Proof of \cref{lem:basic}]
    We denote two environments $\cE_0,\cE_1$. In both environments, we define the random variable $A^*\in[K-1]$ chosen uniformly at random and obliviously to the agents.
    In the environment $\cE_0$, it is not possible to observe any information about $A^*$ because the losses at any time step are sampled i.i.d.\  $\ell_{t}\sim\Ber(\frac{1}{2})$ independent of the action picked.
    In environment $\cE_1$, the loss is still $\ell_{t}\sim\Ber(\frac{1}{2})$ when $A_t\neq A^*$, but differs when $A_t=A^*$. In this case, it is instead drawn according to  $\ell_{t}\sim\Ber(\frac{1}{2}-\Delta)$.
    The agent might interact with the environment for less than $\Nmax$ time-steps, however we define the probability measures $\bbP_{\cE_0}$ and $\bbP_{\cE_1}$ according to exactly $\Nmax$ observations of the environment. If the agent stops playing at time $\tau<\Nmax$, we simply assume the environment continues playing random actions until the end of the game.
    
    Denote $Z=\bbI\{\tau\geq\Nmax\}$ as the indicator of whether the agent plays for sufficiently many time steps.
    The condition in the lemma reads $\bbP_{\cE_1}[Z]\geq \frac{1}{2}$.
    Since in environment $\cE_0$, the agent does not receive any information about $A^*$, it holds $\sum_{t=1}^{\Nmax} \bbP_{\cE_0}(A_t=A^*)=\Nmax /(K-1)$.
    Hence by the divergence decomposition rule of the KL divergence, we have
    \begin{align*}
        D_{KL}(\bbP_{\cE_0} || \bbP_{\cE_1})=\sum_{t=1}^{\Nmax} \bbP_{\cE_0}(A_t=A^*)\kl\left(\frac{1}{2},\frac{1}{2}-\Delta\right) \leq \frac{3\Delta^2\Nmax}{K-1}\leq \frac{1}{32},
    \end{align*}
    where $\kl(p,q)$ denotes the KL-divergence between two Bernoulli distributions with parameters $p$ and $q$ respectively and the last inequality follows as
    \begin{align*}
        \kl\left(\frac{1}{2},\frac{1}{2}-\Delta\right) = -\frac{1}{2}\ln(1-4\Delta^2) \leq 3\Delta^2.
    \end{align*}
    By chaining Pinsker's inequality, we have
    \begin{align*}
        \bbP_{\cE_1}(A_t=A^* \,\land\, Z) &\leq \bbP_{\cE_0}(A_t=A^* \,\land\, Z) + \sqrt{2D_{KL}(\bbP_{\cE_0} || \bbP_{\cE_1})}\\
        &= \frac{\bbP_{\cE_0}(Z)}{K-1} + \sqrt{2D_{KL}(\bbP_{\cE_0} || \bbP_{\cE_1})}\\
        &\leq \frac{\bbP_{\cE_1}(Z)}{K-1} + \left(1+\frac{1}{K-1}\right)\sqrt{2D_{KL}(\bbP_{\cE_0} || \bbP_{\cE_1})}\,. 
    \end{align*}
    Hence
    \begin{align*}
        \bbP_{\cE_1}(A_t=A^* \, \vert\,Z) \leq \frac{1}{K-1} + \frac{(1+1/(K-1))\sqrt{2D_{KL}(\bbP_{\cE_0} || \bbP_{\cE_1})}}{\bbP_{\cE_1}(Z)}\leq \frac{1}{K-1}+3\sqrt{\frac{1}{16}}= \frac{3}{4}\,.
    \end{align*}
    Finally, using the regret definition and combining everything
    \begin{align*}
        \Reg = \E_{\cE_1}\left[\sum_{t=1}^{\Nmax}\bbI\{A_t\neq A^*\}\Delta\,\vert\,Z\right]=\sum_{t=1}^{\Nmax}(1-\bbP(A_t=A^*\,\vert\,Z))\Delta\geq \frac{1}{4}\Delta\Nmax\,.
    \end{align*}
\end{proof}

\begin{proof}[Proof of \cref{thm:adaptive simple}]
Let $\cT=x_T-1$ be the number of switches the agent triggers from the adversary in the game.
We set
$\Delta = \min\left\{\frac{S\sqrt{K-1}}{3072\C\sqrt{T}},\frac{1}{8\sqrt{3}}\right\}$
, which guarantees 
\[\Nmax = \left\lceil\max\left\{\frac{49152\C^2T}{S^2},K-1\right\}\right\rceil\leq \frac{T}{2S}\,.\]
The regret is bounded by
\begin{align*}
    \Reg_{SW}(T,S) \geq \bbP(\cT \neq S)\frac{\Delta}{8}(T-S\Nmax)=(1-\bbP(\cT = S))\frac{\Delta T}{16}\,,
\end{align*}
since the agent cannot have played the optimal action more than $S\Nmax$ times without triggering $S$ switches.
If the probability of triggering the $S$'s switch is below $\frac{1}{2}$, we are done.
Otherwise by \cref{lem:basic}, the regret against a non-switching baseline on arm $K$ is bounded by
\begin{align*}
    \Reg_{SW}(T,1) \geq \bbP(\cT= S) \frac{\Delta}{8}\Nmax S\,.
\end{align*}
By assumption, we have $\Reg_{SW}(T,1)\leq \C\sqrt{(K-1)T}$, hence $\bbP(\cT= S)\leq \frac{8\C\sqrt{(K-1)T}}{\Delta\Nmax S}$.
Plugging this into the bound above, yields
\begin{align*}
    \Reg_{SW}(T,S) \geq \left(1-\frac{8\C\sqrt{(K-1)T}}{\Delta\Nmax S}\right)\frac{\Delta T}{16}\geq \frac{\Delta T}{32}=\Omega\left(\min\left\{\frac{S}{\C}\sqrt{KT},T\right\}\right)\,.
\end{align*}
\end{proof}

\subsection{Stochastic lower bound for proper algorithms and proof of Theorem~\ref{thm:stochastic simple}}
\label{app:lower_bounts_proper_alg}
To proof of our key lemma, Lemma~\ref{lem:sto lower} we first begin by showing that we can restrict our attention only to the outcome space of $(z_t)_{t=1}^N$, where $z_{t,i} = \ell_{t,\pi_i(x_t)}$. This done in the following lemma.

\begin{lemma}
\label{lem: indep lem}
For any $\cE_i, i\in[k]\cup\{0\}$
there exist a bijection from $(z_t = (\ell_{t,\pi_i(x_t)})_{i=1}^k, x_{t,1})$ to $(\ell_t, x_t)$.
$z_t$ is a collection of independent Bernoulli random variables. The means satisfy 
\[\E_{\cE_i}[z_{t,j}] = \begin{cases}\frac{1}{2}&\mbox{ if }i\neq j\\
\frac{1}{2}(1-\Delta)&\mbox{ otherwise.}
\end{cases}\]
Finally $x_{t,1}$ is independent of $z_t$.
\end{lemma}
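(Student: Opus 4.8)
The proof splits into two essentially independent parts: exhibiting the bijection (routine bookkeeping), and computing the law of $z_t$ (where the content lies). The plan below does both; recall throughout that $K=3$, that $\ell_{t,3}=\tfrac12-\tfrac14\Delta$ is a fixed constant in every environment, that $\ell_{t,1}+\ell_{t,2}=1$, that $x_t\in\{1,2\}^k$, and that $z_{t,j}=\ell_{t,x_{t,j}}$.

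\emph{The bijection.} I would exhibit the inverse of $\phi:(\ell_t,x_t)\mapsto(z_t,x_{t,1})$ directly. Its domain and codomain each have $2\cdot 2^k$ elements ($\ell_t$ ranges over $\{(1,0,c),(0,1,c)\}$ with $c=\tfrac12-\tfrac14\Delta$, and $x_t$ over $\{1,2\}^k$; while $z_t$ ranges over $\{0,1\}^k$ and $x_{t,1}$ over $\{1,2\}$), so it suffices to give a left inverse. Given $(z,a)\in\{0,1\}^k\times\{1,2\}$: set $\ell_{t,3}$ to its constant value, put $\ell_{t,a}=z_1$ and $\ell_{t,3-a}=1-z_1$ (legitimate since then exactly one of $\ell_{t,1},\ell_{t,2}$ equals $1$), and for each $j\in[k]$ let $x_{t,j}$ be the unique element of $\{1,2\}$ with $\ell_{t,x_{t,j}}=z_j$. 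One checks this reconstructs $(\ell_t,x_t)$ and that $x_{t,1}=a$, so $\phi$ is a bijection and all probabilistic statements about $(\ell_t,x_t)$ may be rephrased in terms of $(z_t,x_{t,1})$.

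\emph{The law of $z_t$.} Write $y_{t,j}=\bbI\{x_{t,j}=2\}$, so that $(y_{t,j})_{j=1}^k$ are i.i.d.\ $\Ber(\tfrac12)$ and, in every environment, independent of the coin that generates the losses given $x_t$. Conditioned on $x_t$: in $\cE_i$ ($i\in[k]$), the variable $w_t:=\ell_{t,x_{t,i}}$ is $\Ber(\tfrac12(1-\Delta))$ and independent of $y_t$; in $\cE_0$, $w_t:=\ell_{t,1}$ is $\Ber(\tfrac12)$ and independent of $y_t$. Using $\ell_{t,2}=1-\ell_{t,1}$, in all cases $\ell_{t,1}=w_t\oplus y_{t,i_0}$ with $i_0:=i$ in $\cE_i$ and $i_0:=1$ in $\cE_0$, hence
\[
z_{t,j}=\ell_{t,x_{t,j}}=\ell_{t,1}\oplus y_{t,j}=w_t\oplus y_{t,i_0}\oplus y_{t,j}\qquad (j\in[k]),
\]
where $\oplus$ is addition mod $2$. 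I would then apply the $\mathbb{F}_2$-linear bijection $(y_{t,j})_j\mapsto\bigl(v_t,(u_{t,j})_{j\ne i_0}\bigr)$ with $v_t=y_{t,i_0}$ and $u_{t,j}=y_{t,i_0}\oplus y_{t,j}$; a linear bijection maps the uniform law on $\{0,1\}^k$ to itself, so $v_t$ and $(u_{t,j})_{j\ne i_0}$ are again i.i.d.\ $\Ber(\tfrac12)$, still independent of $w_t$. In these coordinates $z_{t,i_0}=w_t$ and $z_{t,j}=w_t\oplus u_{t,j}$ for $j\ne i_0$, and conditioning on $w_t$ gives in one line that $(z_{t,j})_{j=1}^k$ are mutually independent with $z_{t,i_0}$ distributed as $w_t$ and $z_{t,j}\sim\Ber(\tfrac12)$ for $j\ne i_0$ — exactly the claimed mean table. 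Finally $x_{t,1}=1+y_{t,1}$ with $y_{t,1}=v_t$ (if $i_0=1$) or $y_{t,1}=v_t\oplus u_{t,1}$ (otherwise); either way $y_{t,1}$ is the XOR of the fair bit $v_t$, which is independent of $(w_t,(u_{t,j})_{j\ne i_0})$, with a function of the latter, while $z_t$ is itself a function of $(w_t,(u_{t,j})_{j\ne i_0})$, so $x_{t,1}$ is a fair bit independent of $z_t$.

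\emph{Main obstacle.} The bijection and the individual means are routine; the genuinely delicate point is obtaining \emph{simultaneous} mutual independence of the $k$ coordinates of $z_t$ together with independence from $x_{t,1}$, not merely the correct marginals (indeed $z_t$ is \emph{not} independent of the full vector $x_t$). The change of variables $y_{t,j}\mapsto y_{t,i_0}\oplus y_{t,j}$ is the crux: it converts the common-cause coupling through $\ell_{t,1}$ into a clean ``XOR with an independent fair bit'' structure, after which everything reduces to the elementary fact that XOR-ing an independent fair bit into a family preserves independence. This is precisely the structural fact that later lets Lemma~\ref{lem:sto lower} be argued entirely in $z$-space while discarding $x_{t,1}$ as uninformative.
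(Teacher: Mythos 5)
Your proof is correct up to one typo, and it takes a genuinely different and more explicit route than the paper's for the independence claims. The paper establishes independence of the $z_{t,j}$'s by a short inductive argument (``$z_{t,j}$ depends on whether $x_{t,i}=x_{t,j}$, which is a fair coin'') and establishes independence of $x_{t,1}$ from $z_t$ by appeal to ``total symmetry over arms $\{1,2\}$''; both steps are correct but terse. You instead pass to the indicator coordinates $y_{t,j}=\bbI\{x_{t,j}=2\}$, derive the clean identity $z_{t,j}=w_t\oplus y_{t,i_0}\oplus y_{t,j}$, and then apply an $\mathbb{F}_2$-linear change of variables $(y_{t,j})_j\mapsto(v_t,(u_{t,j})_{j\ne i_0})$ so that mutual independence of the $z_{t,j}$'s \emph{and} independence of $x_{t,1}$ from $z_t$ both drop out of the single fact that XOR with an independent fair bit preserves independence. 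This buys a uniform, fully verifiable argument that replaces the paper's symmetry appeal with concrete algebra; the cost is slightly more setup. The bijection step is essentially the same in both proofs.

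There is one bug to fix. In the $\cE_0$ case you set $w_t:=\ell_{t,1}$, but then your unified identity $\ell_{t,1}=w_t\oplus y_{t,i_0}$ with $i_0=1$ reads $\ell_{t,1}=\ell_{t,1}\oplus y_{t,1}$, which forces $y_{t,1}\equiv 0$ and is false. The intended definition for $\cE_0$ is $w_t:=\ell_{t,x_{t,1}}$ (equivalently $w_t=z_{t,1}$): this is $\Ber(\tfrac12)$, independent of $y_t$ (since $\ell_{t,1}\sim\Ber(\tfrac12)$ is independent of $x_t$, so $\ell_{t,1}\oplus y_{t,1}$ is a fair bit even conditionally on $y_t$), and it does satisfy $\ell_{t,1}=w_t\oplus y_{t,1}$. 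With that one-word correction the rest of your argument goes through unchanged. (Alternatively, in $\cE_0$ you could drop the $i_0$ term entirely and just write $z_{t,j}=w_t\oplus y_{t,j}$ with $w_t=\ell_{t,1}$, handling that environment without the change of variables; but the unified form is cleaner once $w_t$ is defined consistently as the loss of the ``special'' policy, namely $\pi_{i}$ in $\cE_i$ and $\pi_1$ in $\cE_0$.)
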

\begin{proof}
The direction $(\ell_t,x_t)\rightarrow (z_t,x_{t,1})$ is given by the definition of $z_t$. The other direction is provided by
\begin{align*}
    \ell_{t,i} = \begin{cases} z_{t,1} &\mbox{ if }i=x_{t,1}\\
    1-z_{t,1} &\mbox{ otherwise,}
    \end{cases},\qquad
    x_{t,i} = \begin{cases} x_{t,1} &\mbox{ if }z_{t,i}=z_{t,1}\\
    3-x_{t,1} &\mbox{ otherwise.}
    \end{cases}
\end{align*}
To show the independence of $z_{t,i}$, note that by the data generation process we select $z_{t,i}=\ell_{t,\pi_i(x_t)}$ in environment $i$ such that it is a  Bernoulli with mean $\frac{1}{2}(1-\Delta)$ independent of $x_t$ (in environment $\cE_0$, we sample $z_{t,1}$ with mean $\frac{1}{2}$). Now conditioned on $z_{t,i}$, the value of $z_{t,j}, j\neq i$ depends on whether $x_{t,i}=x_{t,j}$. $x_{t}$ is a collection of i.i.d.\ Bernoulli's with mean $\frac{1}{2}$, hence $z_{t,j}$ is independent of $z_{t,i}$ with mean $\frac{1}{2}$. We continue with the same argument over all $k$ and show that all components of $z_t$ are independent with the claimed means.
Finally to show that $x_{t,1}$ is independent of $z_t$, observe that we have total symmetry over the arms $\{1,2\}$ in our construction. Hence for any environment $i$, we have that \[
\bbP_{\cE_i}[x_{t,1}=1|z_t]=\bbP_{\cE_i}[x_{t,1}=2|z_t]=\frac{1}{2}=\bbP_{\cE_i}[x_{t,1}=1]=\bbP_{\cE_i}[x_{t,1}=2]\,.\]
\end{proof}

We now recall the Berry-Essen inequality which we will use in the proof of Lemma~\ref{lem:sto lower}.
\begin{theorem}[Berry-Essen inequality~\citep{carl1942liapunoff,tyurin2009new}]
Let $Y_1,\ldots,Y_n$ be independent mean zero random variables second moment $\mathbb{E}[Y_i^2] = \sigma_i^2$ and absolute third moment $\mathbb{E}[|Y_i|^{3}] = \rho_i$. If $F_n$ denotes the CDF of $\frac{\sum_{i=1}^n Y_i}{\sqrt{\sum_{i=1}^n \sigma_i^2}}$ and $\Phi$ denotes the CDF of a standard Gaussian variable then it holds that
\begin{align*}
    \sup_{\alpha\in\mathbb{R}}|F_n(\alpha) - \Phi(\alpha)| \leq C\frac{\sum_{i=1}^n \rho_i}{\left(\sum_{i=1}^n \sigma_i^2\right)^{3/2}},
\end{align*}
where $C$ is some absolute constant upper bounded by $1$.
\end{theorem}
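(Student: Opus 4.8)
The plan is to prove this classical estimate by the standard Fourier-analytic route: Esseen's smoothing inequality, together with a careful bound on the characteristic function of the normalized sum. Write $B^2=\sum_{i=1}^n\sigma_i^2$, let $S=B^{-1}\sum_{i=1}^n Y_i$ be the normalized sum with distribution function $F_n$ and characteristic function $\varphi(t)=\E[e^{itS}]=\prod_{i=1}^n f_i(t/B)$, where $f_i(s)=\E[e^{isY_i}]$, and set $\kappa=B^{-3}\sum_{i=1}^n\rho_i$ for the Lyapunov ratio appearing in the bound. Since $\sup_\alpha|F_n(\alpha)-\Phi(\alpha)|\le 1$ holds unconditionally, the claim is automatic whenever $\kappa$ exceeds a fixed constant, so one may assume $\kappa$ is as small as required; in particular $\kappa\le1$. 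I will use two elementary facts: by Jensen, $\sigma_i^3\le\rho_i$, so $\sigma_i/B\le\kappa^{1/3}\le1$ and $\sum_i\sigma_i^4\le(\max_i\sigma_i)\sum_i\sigma_i^3\le\kappa^{1/3}B\sum_i\rho_i=\kappa^{4/3}B^4$; and, since $\E Y_i=0$, a third-order Taylor expansion gives $f_i(s)=1-\tfrac12\sigma_i^2s^2+\theta_i\tfrac16\rho_i|s|^3$ with $|\theta_i|\le1$.

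The first step is to invoke Esseen's smoothing inequality: for any distribution function $F$, any differentiable $G$ with $G(-\infty)=0$, $G(+\infty)=1$ and $|G'|\le m$, and their Fourier--Stieltjes transforms $\hat F,\hat G$, one has for every $T>0$
\[
\sup_x|F(x)-G(x)|\;\le\;\frac1\pi\int_{-T}^{T}\frac{|\hat F(t)-\hat G(t)|}{|t|}\,dt+\frac{24m}{\pi T}\,,
\]
which is proved by Fourier inversion applied to a smoothed version of $F-G$ and is established in the cited references. I would apply it with $F=F_n$, $G=\Phi$ (so $m=(2\pi)^{-1/2}$, $\hat G(t)=e^{-t^2/2}$) and cutoff $T=\tfrac1{2\kappa}$, making the second term equal to $\tfrac{48\kappa}{\pi\sqrt{2\pi}}=O(\kappa)$. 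What remains is to show $\int_{-T}^{T}|t|^{-1}|\varphi(t)-e^{-t^2/2}|\,dt=O(\kappa)$, which I would obtain by splitting the integral at $T_0=\kappa^{-1/3}$.

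On the central range $|t|\le T_0$ every $u_i:=\sigma_i^2t^2/(2B^2)$ satisfies $u_i\le\tfrac12$ (from $\sigma_i^2/B^2\le\kappa^{2/3}$ and $t^2\le\kappa^{-2/3}$), so writing $v_i:=\rho_i|t|^3/(6B^3)$ the Taylor estimate gives $|f_i(t/B)|\le 1-u_i+v_i\le e^{-u_i+v_i}$ and $|f_i(t/B)-e^{-u_i}|\le v_i+\tfrac12u_i^2$. Summing, and using $\sum_i v_i=\kappa|t|^3/6$ and $\sum_i u_i^2\le\tfrac14\kappa^{4/3}t^4\le\tfrac14\kappa|t|^3$ on this range, yields $\sum_i|f_i(t/B)-e^{-u_i}|\le\tfrac{7}{24}\kappa|t|^3$. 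Then I would telescope, $\varphi(t)-e^{-t^2/2}=\sum_j\bigl(f_j(t/B)-e^{-u_j}\bigr)\prod_{i<j}f_i(t/B)\prod_{i>j}e^{-u_i}$, and bound each surviving factor by $e^{-u_i+v_i}$ (resp.\ $e^{-u_i}$); since $\sum_i u_i=t^2/2$ and $\kappa|t|^3/6\le t^2/6$ on $|t|\le T_0$, every such product is at most $e^{1/2}e^{-t^2/3}$, giving $|\varphi(t)-e^{-t^2/2}|\le e^{1/2}\tfrac{7}{24}\kappa|t|^3e^{-t^2/3}$ and hence $\int_{-T_0}^{T_0}|t|^{-1}|\varphi(t)-e^{-t^2/2}|\,dt\le e^{1/2}\tfrac{7}{24}\kappa\int_{\bbR}t^2e^{-t^2/3}\,dt=O(\kappa)$.

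On the outer range $T_0<|t|\le T$ I would bound the two terms separately. The Gaussian contributes $\int_{|t|>T_0}e^{-t^2/2}\,dt\le e^{-\kappa^{-2/3}/2}$, which is $o(\kappa)$. For $\varphi$ I would use a good/bad split, valid for all $|t|\le\tfrac1{2\kappa}$: call index $i$ \emph{good} if $|t|/B\le\sigma_i^2/\rho_i$, in which case $v_i\le\tfrac13u_i$ and $u_i\le1$, so $|f_i(t/B)|\le 1-\tfrac23u_i\le e^{-\sigma_i^2t^2/(3B^2)}$; a \emph{bad} index has $\sigma_i^2<\rho_i|t|/B$, so the bad variances sum to less than $(|t|/B)\sum_i\rho_i=\kappa|t|B^2\le\tfrac12B^2$, hence the good variances exceed $\tfrac12B^2$, and multiplying the good factors (the bad ones by $1$) gives $|\varphi(t)|\le e^{-t^2/6}$; thus $\int_{T_0<|t|\le T}|t|^{-1}|\varphi(t)|\,dt\le\int_{|t|>T_0}e^{-t^2/6}\,dt=o(\kappa)$. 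Collecting the smoothing term and the three integral pieces gives $\sup_\alpha|F_n(\alpha)-\Phi(\alpha)|\le C\kappa$ for an explicit absolute $C$; optimizing the cutoffs $T,T_0$, the handling of the $t^4$ remainder, and invoking the sharpened form of Esseen's inequality then lowers $C$ below $1$, as in \citep{carl1942liapunoff,tyurin2009new}. The main obstacle is precisely this control of $\varphi$ over the full window $|t|\le\tfrac1{2\kappa}$, which is far wider than the central range $|t|\le\kappa^{-1/3}$ on which all summands are simultaneously small: for non-identically-distributed (in particular lattice-supported) $Y_i$ there is no uniform off-origin decay of an individual characteristic function, so the per-factor estimate fails for some indices, and the argument survives only because the variances of the offending indices total at most $O(\kappa|t|)\cdot B^2$, leaving enough Gaussian damping from the rest. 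The remaining difficulty — tightening every numerical constant to reach $C\le1$ — is purely bookkeeping.
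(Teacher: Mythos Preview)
The paper does not prove this theorem at all: it is quoted as a classical result with citations to \citet{carl1942liapunoff} and \citet{tyurin2009new}, and is then used as a black box inside the proof of \cref{lem:sto lower}. So there is no ``paper's own proof'' to compare against.

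That said, your outline is the standard Fourier-analytic proof of the non-i.i.d.\ Berry--Esseen theorem and is essentially correct as a route to \emph{some} absolute constant $C$: Esseen's smoothing lemma with cutoff $T\asymp\kappa^{-1}$, a Taylor-based bound on $|\varphi(t)-e^{-t^2/2}|$ in the central window $|t|\lesssim\kappa^{-1/3}$, and the good/bad variance split to control $|\varphi(t)|$ on the outer window are exactly the ingredients one finds in textbook treatments. The one point where your write-up is not a proof but a promise is the final sentence: the constants your argument produces as written (e.g.\ the $24m/(\pi T)$ smoothing remainder alone already contributes $48/(\pi\sqrt{2\pi})\approx 6.1$ to $C$) are well above $1$, and ``optimizing the cutoffs\ldots lowers $C$ below $1$'' is precisely the nontrivial content of the Tyurin reference. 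If the goal were to match the paper's stated bound $C\le 1$, you would need to either reproduce that sharpening or simply cite it, as the paper does; for the paper's purposes only the existence of an absolute constant matters, and your argument delivers that.
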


We will also need the following result relating the third central moment of a non-negative random variable to the third moment.
\begin{claim}
\label{claim:moment_ineq}
Let $X \geq 0$ be a non-negative r.v. with mean $\mu > 0$. Then $\mathbb{E}[|X-\mu|^{3}] \leq 2\mathbb{E}[X^3]$.
\end{claim}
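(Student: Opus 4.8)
\textbf{Proof proposal for Claim~\ref{claim:moment_ineq}.}
The plan is to bound $|X-\mu|^3$ pointwise on two complementary events and then control the resulting quantities by $\mathbb{E}[X^3]$. First I would split the sample space according to whether $X\geq\mu$ or $X<\mu$. On the event $\{X\geq\mu\}$ we have $0\leq X-\mu\leq X$, so monotonicity of $t\mapsto t^3$ on $[0,\infty)$ gives $|X-\mu|^3=(X-\mu)^3\leq X^3$. On the event $\{X<\mu\}$ we have $0\leq \mu-X\leq\mu$ (here we use $X\geq 0$), hence $|X-\mu|^3=(\mu-X)^3\leq\mu^3$.

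Taking expectations and combining the two bounds yields
\begin{align*}
    \mathbb{E}[|X-\mu|^3] \leq \mathbb{E}\!\left[X^3\,\mathbb{I}\{X\geq\mu\}\right] + \mu^3\,\mathbb{P}(X<\mu) \leq \mathbb{E}[X^3] + \mu^3\,.
\end{align*}
It then remains to absorb the term $\mu^3$ into $\mathbb{E}[X^3]$. Since $t\mapsto t^3$ is convex on $[0,\infty)$ and $X\geq0$, Jensen's inequality gives $\mu^3=(\mathbb{E}[X])^3\leq\mathbb{E}[X^3]$, and therefore $\mathbb{E}[|X-\mu|^3]\leq 2\,\mathbb{E}[X^3]$, as claimed.

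There is no real obstacle here: the only points requiring the hypotheses are the use of $X\geq 0$ in the bound $\mu-X\leq\mu$ on $\{X<\mu\}$, and in the application of Jensen on the half-line. The estimate is tight up to the constant (consider $X$ concentrated near $0$ with a small atom far out), so one should not expect to remove the factor $2$ without further assumptions; for the purposes of the Berry--Esseen application this constant is immaterial.
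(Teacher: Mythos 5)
Your proof is correct and follows essentially the same route as the paper's: bound $|X-\mu|^3$ pointwise by $\max\{X^3,\mu^3\}\le X^3+\mu^3$ (you make the $\max$ explicit via the case split on $\{X\ge\mu\}$), then absorb $\mu^3$ via Jensen. No discrepancies.
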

\begin{proof}
From triangle inequality and Jensen's inequality we have
\begin{align*}
    \mathbb{E}[|X-\mu|^{3}] \leq \mathbb{E}[\max\{|X|^3,|\mu|^3\}] \leq \mathbb{E}[X^3]+\mathbb{E}[X]^3\leq 2\mathbb{E}[X^3],
\end{align*}
because $X$ and $\mu$ are non-negative.
\end{proof}

Finally, we need the following useful TV distance inequalities.
\begin{lemma}
\label{lem:tv_ineqs}
 For any random variables $X,Y$ (not necessarily independent) over measures $\bbP,\bbQ$ it holds that
 \begin{align*}
     &\|\bbP_{X,Y}-\bbQ_{X,Y}\|_{TV} \leq \E_{X\sim \bbP_X}\left[\|\bbP_{Y|X}-\bbQ_{Y|X}\|_{TV}\right]+\|\bbP_{X}-\bbQ_{X}\|_{TV}\,.
 \end{align*}
\end{lemma}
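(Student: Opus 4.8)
The plan is to prove this by the standard ``add and subtract'' decomposition of the joint density, which reduces the joint total variation to a conditional term plus a marginal term. Let $p,q$ denote the densities of $\bbP,\bbQ$ with respect to a common $\sigma$-finite dominating measure, and factor $p(x,y)=p(x)\,p(y\mid x)$ and $q(x,y)=q(x)\,q(y\mid x)$. The key algebraic step, obtained by inserting $\pm\, p(x)q(y\mid x)$ and applying the triangle inequality, is
\begin{align*}
    |p(x)p(y\mid x) - q(x)q(y\mid x)| \leq p(x)\,|p(y\mid x) - q(y\mid x)| + q(y\mid x)\,|p(x) - q(x)|\,.
\end{align*}

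First I would integrate this pointwise bound over $y$. The first term on the right integrates to $2\,p(x)\,\|\bbP_{Y\mid X=x} - \bbQ_{Y\mid X=x}\|_{TV}$ by the definition of total variation as half the $L^1$ distance, and the second term integrates to $|p(x) - q(x)|$ because $q(\cdot\mid x)$ is a probability density and hence integrates to one. Then integrating over $x$ turns the first contribution into $2\,\E_{X\sim\bbP_X}\!\left[\|\bbP_{Y\mid X} - \bbQ_{Y\mid X}\|_{TV}\right]$ and the second into $2\,\|\bbP_X - \bbQ_X\|_{TV}$, while the left-hand side equals $2\,\|\bbP_{X,Y} - \bbQ_{X,Y}\|_{TV}$. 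Dividing through by $2$ gives the claimed inequality. For discrete $X,Y$ the same computation goes through verbatim with sums in place of integrals.

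I do not expect a genuine obstacle here; the only point requiring a little care is the measure-theoretic setup, namely choosing a common dominating measure so that the densities $p,q$ and the conditional densities $p(\cdot\mid x),q(\cdot\mid x)$ are well defined, and observing that the argument never uses independence — the factorization $p(x,y)=p(x)p(y\mid x)$ is valid for arbitrary jointly distributed $X,Y$ — so the statement holds in the stated generality.
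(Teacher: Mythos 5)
Your proof is correct and uses essentially the same decomposition as the paper: insert $\pm\,p(x)q(y\mid x)$ to split the joint difference into a conditional term and a marginal term, then integrate. The paper carries out the identical algebra via the sign variable $s_{x,y}$ and (implicitly) the unhalved $L^1$ convention for $\|\cdot\|_{TV}$, but since every term scales by the same constant under either convention, the two proofs are the same argument.
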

\begin{proof}
Denote $s_{x,y}=\operatorname{sign}(\bbP(X=x,Y=y)-\bbQ(X=x,Y=y))$.
We have
\begin{align*}
    \|\bbP_{X,Y}-\bbQ_{X,Y}\|_{TV} &= \sum_{x,y}  s_{x,y}(p(x)\cdot p(y|x)-q(x)\cdot q(y|x))\\
    &=\sum_{x} p(x) \sum_y s_{x,y}(p(y|x)-q(y|x))
    +\sum_{x,y} s_{x,y}(p(x)-q(x))q(y|x)\,.
\end{align*}
Bounding all terms by the abs completes the proof.
\end{proof}

\begin{proof}[Proof of \cref{lem:sto lower}]
Let $\bbP = \bbP_{\cE_0}$ be the measure induced by the algorithm under environment $\cE_0$ and $\bbQ=\frac{1}{k}\sum_{i=1}^k\bbP_{\cE_i}$ the mixture of measures induced by the remaining $k$ environments.
We have for any event $E$
\begin{align*}
    \min_{i\in[k]}\bbP_{\cE_i}(E)-
    \bbP_{\cE_0}(E) \leq \bbQ(E)-
    \bbP(E) \leq \frac{1}{2}\|\bbQ-\bbP\|_{TV}\,.
\end{align*}
Since the random variables at time $t$ are independent of any previous time step, we can assume that the environment samples $N$ i.i.d.\ full-information samples $(Z,X_1)=((z_s)_{s=1}^N,(x_{1,s})_{s=1}^N)$ and $T$ i.i.d.\ contexts $Y=(x_s)_{s=1}^T$ ahead of time.
At any time $t$, if the agent chooses to play a revealing action, he observes the
next tuple in $(Z,X_1)$, while if the agent does not play a revealing action, he observes the next element in $Y$. 
This construction crucially relies on the agent deciding whether to play a revealing action at time $t$ independently of $x_t$.
Additional information strictly increases the total variation, so we can assume the agent always observes the full collection of random variables at the end
\[\|\bbQ-\bbP\|_{TV}\leq\|\bbQ_{Z,X_1,Y}-\bbP_{Z,X_1,Y}\|_{TV}\,.\]
Applying \cref{lem:tv_ineqs} and observing that $\bbQ_{X_1,Y}=\bbP_{X_1,Y}$, since the contexts are sampled from the same distribution in any environment, we have
\begin{align*}
    \|\bbQ_{Z,X_1,Y}-\bbP_{Z,X_1,Y}\|_{TV} &\leq \E_{X_1,Y\sim \bbQ}\left[\|\bbQ_{Z|X_1,Y}-\bbP_{Z|X_1,Y}\|_{TV}\right]+\|\bbQ_{X_1,Y}-\bbP_{X_1,Y}\|_{TV}\\
    &=\|\bbQ_{Z}-\bbP_{Z}\|_{TV}\,,
\end{align*}
where the last step follows from independence of $Z$ and $X_1,Y$.
Thus we can restrict the problem to bounding the TV over $Z$.

Let $\Omega = \{0,1\}^{k\times N}$ be the outcome space of $Z$, we have
\begin{align*}
    \frac{1}{2}\|\bbQ_Z-\bbP_Z\|_{TV} =  \sum_{Z\in\Omega}\bbI\left\{\bbQ(Z)>\bbP(Z)\right\}\left(\bbQ(Z)-\bbP(Z)\right)
\end{align*}
For a fixed outcome $Z\in\Omega$, denote $n_i = \sum_{t=1}^Nz_{i,t}$, the sum of losses of policy $\pi_i$.
We have
\begin{align*}
    &\bbP(Z)=\bbP_{\cE_0}(Z)=\left(\frac{1}{2}\right)^{Nk} \mbox{ and } \bbP_{\cE_i}(Z)=\left(\frac{1}{2}\right)^{Nk}(1-\Delta)^{n_i}(1+\Delta)^{N-n_i}\,,\mbox{ hence}\\
    &\bbQ(Z)=\sum_{i=1}^k\frac{1}{k}\bbP_{\cE_i}(Z)>\bbP(Z)\,\Leftrightarrow \sum_{i=1}^k \frac{1}{k}\left(\frac{1-\Delta}{1+\Delta}\right)^{n_i} > (1+\Delta)^{-N}\,.
\end{align*}
Denote $\kappa=\ln\left(\frac{1-\Delta}{1+\Delta}\right)$.
Due to symmetry, for any $i\in[k]$: 
\[
\bbP_{\cE_i}\left(\sum_{h=1}^k\exp(n_h\kappa)>k(1+\Delta)^{-N}\right)=
\bbQ\left(\sum_{h=1}^k\exp(n_h\kappa)>k(1+\Delta)^{-N}\right)\,,
\]
hence
\begin{align*}
\frac{1}{2}\|\bbQ_Z-\bbP_Z\|_{TV} &\leq \bbP_{\cE_1}\left(\sum_{h=1}^k\exp(n_h\kappa)>k(1+\Delta)^{-N}\right)-
\bbP_{\cE_0}\left(\sum_{h=1}^k\exp(n_h\kappa)>k(1+\Delta)^{-N}\right)\\
&= \bbP_{\cE_0}\left(\sum_{h=1}^k\exp(n_h\kappa)\leq k(1+\Delta)^{-N}\right)-\bbP_{\cE_1}\left(\sum_{h=1}^k\exp(n_h\kappa)\leq k(1+\Delta)^{-N}\right)\,.
\end{align*}

Next we use the formula for the MGF of a Binomial r.v. $B(N,p)$ given by
\begin{align*}
    \mathbb{E}\left[\exp(t\kappa k)\right] = (p\exp(t\kappa) + (1-p))^N,
\end{align*}
to compute the expectation, variance and third moment of the r.v. $\exp(n_i\kappa)$, where $n_i$ either follows $\cE_0$ or $\cE_1$. Finally we will use the Berry-Essen inequality. We first compute for $m\in\bbN$
\begin{align*}
    &\E_{\cE_j}[\exp(n_im\kappa)] =\begin{cases} \left(\frac{(1+\Delta)^m+(1-\Delta)^m}{2(1+\Delta)^m}\right)^N\mbox{ if }i\neq j\\
    \left(\frac{(1+\Delta)^{m+1}+(1-\Delta)^{m+1}}{2(1+\Delta)^{m}}\right)^N\mbox{ otherwise.}
    \end{cases}
\end{align*}
By using $\Delta^2 N \geq \frac{\ln(k)}{40}\geq \frac{1}{2}$
\begin{align*}
    \mathbb{E}_{\cE_0}\left[\exp(n_i \kappa)\right] &= \frac{1}{(1+\Delta)^N}\\
    \textrm{Var}_{\cE_0}[\exp(n_i \kappa)] &= \frac{(1+\Delta^2)^N - 1}{(1+\Delta)^{2N}}\geq  \frac{\Delta^2N}{(1+\Delta)^{2N}}\geq  \frac{1}{2(1+\Delta)^{2N}}\\
     \mathbb{E}_{\cE_0}\left[\exp(3 n_i \kappa)\right] &= \left(\frac{1+3\Delta^2}{(1+\Delta)^3}\right)^N\leq \frac{\exp(3\Delta^2N)}{(1+\Delta)^{3N}}\,,
\end{align*}
where we have used the facts that $(1+x)^a \leq \exp(ax)$ and $(1+x)^a \geq 1+ax$, for $a\geq 1$.
The Berry-Essen inequality together with Claim~\ref{claim:moment_ineq} now imply that 
\begin{align*}
    \mathbb{P}_{\cE_0}\left[\sum_{i=1}^k \exp(\kappa n_i) \leq k(1+\Delta)^{-N}\right] \leq \frac{1}{2} + \frac{8\exp(3N\Delta^2)}{\sqrt{k}}\,.
\end{align*}

Next we compute the conditional expectation, variance and third moment for $\cE_1$.
\begin{align*}
    \mathbb{E}_{\cE_1}[\exp(n_1\kappa)] &=  \left(\frac{1+\Delta^2}{1+\Delta}\right)^N\\
    \textrm{Var}_{\cE_1}[\exp(n_1\kappa)] &= \left(\frac{1+3\Delta^2}{(1+\Delta)^2}\right)^N-\left(\frac{1+\Delta^2}{1+\Delta}\right)^{2N}\geq 0\\
    \mathbb{E}_{\cE_1}\left[\exp(3n_1\kappa)\right] &=  \left(\frac{1+4\Delta^2+\Delta^4}{(1+\Delta)^3}\right)^N\leq \frac{\exp(5\Delta^2N)}{(1+\Delta)^{3N}}
\end{align*}
Let $Y_j = \exp(\kappa n_j) - \mathbb{E}_{\cE_1}[\exp(\kappa n_j)]$, $\gamma = \frac{(1+\Delta^2)^N-1}{(1+\Delta)^N}$. Then we have 
\begin{align*}
    -\mathbb{P}_{\cE_1}\left[\sum_{j=1}^k \exp(\kappa n_j) \leq k(1+\Delta)^{-N}\right] &= - \mathbb{P}_{\cE_1}\left[\frac{\sum_{j=1}^k Y_j}{\sqrt{\sum_{j=1}^k \textrm{Var}_{\cE_1}(Y_j)}} \leq -\frac{\gamma}{\sqrt{\sum_{j=1}^k \textrm{Var}_{\cE_1}(Y_j)}}\right]\\
    &\leq -\Phi\left(-\frac{\gamma}{\sqrt{\sum_{j=1}^k \textrm{Var}_{\cE_1}(Y_j)}}\right) + \frac{8\exp(5N\Delta^2)}{\sqrt{k-1}},
\end{align*}
where in the inequality we have used the Berry-Essen inequality. 
To bound the remaining term, we have
\begin{align*}
    \frac{\gamma}{\sqrt{\sum_{j=1}^k \textrm{Var}_{\cE_1}(Y_j)}} \leq \sqrt{\frac{(1+\Delta^2)^N-1}{k-1}}\,.
\end{align*}
Let $X$ be a standard Normal r.v., then to bound $-\Phi\left(-\frac{\gamma}{\sqrt{\sum_{j=1}^k \textrm{Var}_{\cE_1}(Y_j)}}\right)$ we have
\begin{align*}
    \Phi\left(-\frac{\gamma}{\sqrt{\sum_{j=1}^k \textrm{Var}_{\cE_1}(Y_j)}}\right) = \mathbb{P}\left(X \geq \frac{\gamma}{\sqrt{\sum_{j=1}^k \textrm{Var}_{\cE_1}(Y_j)}}\right) \geq \frac{1}{2} -  \sqrt{\frac{(1+\Delta^2)^N-1}{k-1}},
\end{align*}
where in the inequality we decomposed the tail probability into the integral from $0$ to $\infty$ minus the integral from $0$ to $\sqrt{\frac{\gamma}{k}}$. Combining the above bounds we have that the TV is bounded by
\begin{align*}
    \frac{1}{2}\|\bbQ_Z-\bbP_Z\|_{TV} \leq 8\exp(3\Delta^2N-\ln(k)/2)+8\exp(5\Delta^2N-\ln(k-1)/2)+\exp(\Delta^2N-\ln(k-1)/2).
\end{align*}
Finally, we use
$\Delta^2N\leq \lfloor\frac{\ln(k-1)}{20\Delta^2}\rfloor \leq \frac{\ln(k-1)}{20}$ and $k>e^{20}$ to obtain
\begin{align*}
    \frac{1}{2}\|\bbQ_Z-\bbP_Z\|_{TV} \leq 17\exp(-\ln(k-1)/4)= \frac{17}{\sqrt[4]{k-1}}\leq \frac{1}{4}\,.
\end{align*}
\end{proof}

With Lemma~\ref{lem:sto lower} we are ready to give the proof of Theorem~\ref{thm:stochastic simple}.
\begin{proof}[Proof of \cref{thm:stochastic simple}]

Let $c_1=\frac{1}{160}, c_2=\frac{1}{10}c_1^{-2}$.
We assume w.l.o.g. that $c_2\C^2\leq\ln(k)\leq \frac{1}{2}T$.
If $\ln(k)=\cO(\C^2)$, then the regret is lower bounded by the regret of $\Pi_1$, if $\ln(k) = \Omega(T)$, the optimal regret is linear in $T$ anyway.
    Pick 
     $\Delta = \min\{\frac{c_1\ln(k)}{\C\sqrt{T}}, \frac{1}{4}\}$. This choice implies $N=\lfloor\frac{\ln(k)}{20\Delta^2}\rfloor\leq \frac{T}{2}$.
    Denote $\cN$ the random number of plays in action $\{1,2\}$, i.e. the number of observations of the full information game. The regret in environment $\cE_0$ is given by
    \begin{align*}
        \Reg_{\cE_0}(T,\Pi_1) = \E_{\cE_0}[\cN]\frac{\Delta}{4} \leq \C\sqrt{ T}\leq \frac{c_1\ln(k)}{\Delta}\,.
    \end{align*}
    Hence
    \begin{align*}
    \E_{\cE_0}[\cN] \leq 4\frac{c_1\ln(k)}{\Delta^2}\leq 80c_1 N\,.
    \end{align*}
    Given any algorithm $\cA$, we create a modified algorithm $\underline\cA$ that uses a stopping time to commit to action $3$ after it played $N$ times action $\{1,2\}$.
    By Markov inequality, the probability of $\cA$ hitting the stopping time on $\cE_0$ is bounded by $80c_1\leq \frac{1}{2}$. Denote that event by $E$. By choice of our constants, the conditions for \cref{lem:sto lower} are met, which implies
    \begin{align*}
        \bbP_{\cE_{i^*}}(E):=\min_{i\in[k]} \bbP_{\cE_i}(E) \leq \frac{1}{2}+\frac{1}{4}=\frac{3}{4}\,.
    \end{align*}
    Hence
    \begin{align*}
 \Reg_{\cE_{i^*}}(T,\Pi_2,\cA)\geq \frac{1}{4}\Reg_{\cE_{i^*}}(T,\Pi_2, \underline{\cA})\geq \frac{1}{4}(T-N)\frac{\Delta}{4}\geq \frac{\Delta T}{32} =\Omega\left(\min\left\{\frac{\ln(k)\sqrt{T}}{\C},T\right\}\right)\,.
    \end{align*}
\end{proof}

\subsection{Stochastic lower bound for improper algorithms and proof of Theorem~\ref{thm: improper}}
\label{app:lower_bounts_proper_alg}

We first show the counterpart to Lemma~\ref{lem:sto lower} for improper algorithms. The proof uses Lemma~\ref{lem:sto lower} together with a sort of a union bound over all possible realizations of time steps at which an improper algorithm chooses to observe the full information game.

\begin{proof}[Proof of Lemma~\ref{lem:sto lower improper}]
As in the proof of \cref{lem:sto lower}, we denote $\bbP=\bbP_{\cE_0}$ the measure induced by running the algorithm in environment $\cE_0$ and $\bbQ=\frac{1}{k}\sum_{i=1}^k\bbP_{\cE_i}$ the mixture over measures induced by environments $i\in[k]$.
$\bbP$ and $\bbQ$ are measures over all possible outcomes of the random variables  $Z=(z_t)_{t=1}^T,X=(x_t)_{t=1}^T,\cT=(\tau_i)_{i=1}^N$.
Let for any subset $M\subset[T]$ denote $Z(M)=(z_t)_{t\in M}$, then the observations of the algorithms are $X,\cT,Z(\cT)$. 
We have
\begin{align*}
    &\TVQP{X,\cT,Z(\cT)}\\
    &=\sum_{M\in\cN,z\in\{0,1\}^{Nk},x\in\{1,2\}^{Tk}} | \bbQ(\cT=M, Z(M)=z, X=x)-\bbP(\cT=M, Z(M)=z, X=x)|\\
    &=\sum_{M\in\cN,z\in\{0,1\}^{Nk},x\in\{1,2\}^{Tk}}\Big( | \bbQ(\cT=M| Z(M)=z, X=x)\cdot\bbQ(Z(M)=z, X=x)\\
    &\qquad\qquad\qquad\qquad\qquad\qquad\qquad-\bbP(\cT=M| Z(M)=z, X=x)\cdot\bbP( Z(M)=z, X=x)|\Big)\\
    &=\sum_{M\in\cN,z\in\{0,1\}^{Nk},x\in\{1,2\}^{Tk}}\bbQ(\cT=M| Z(M)=z, X=x) | \bbQ(Z(M)=z, X=x)-\bbP( Z(M)=z, X=x)|\\
    &\leq\sum_{M\in\cN}\TVQP{Z(M),X}\,,
\end{align*}
where we use in the third equality that both measures are induced by the same algorithm and hence the conditional probabilities satisfy 
\[\bbQ(\cT=M| Z(M)=z, X=x)=\bbP(\cT=M| Z(M)=z, X=x)\,.\]
We use Lemma~\ref{lem:sto lower} to finish the proof.
\end{proof}

\begin{proof}[Proof of \cref{thm: improper}]
    Let $\Delta = \frac{1}{4}$ and $N = \lceil32\C\sqrt{T}\rceil$. Following the proof of Theorem~\ref{thm:stochastic simple} we can reduce the problem to a player which plays an algorithm that commits to action $0$ after $N$ observations of the full game. Using Lemma~\ref{lem:sto lower improper} we can bound
    \begin{align*}
        \frac{1}{2}\TVQP{\cT,Z(\cT),X} \leq 17\exp\left(N\log(T)  - \frac{\log(k-1)}{4}\right). 
    \end{align*}
    It is sufficient to set $k = 1 + 68\exp(4(1+32\C\log(T)\sqrt{T})) $ to ensure that then $\frac{1}{2}\TVQP{\cT,Z(\cT),X} \leq 1/4$. We can then proceed as in the proof of Theorem~\ref{thm:stochastic simple}.
\end{proof}

\subsection{Impossibility of second order bounds}

\begin{proof}[Proof of \cref{thm:yoav contradiction}]
 Given an algorithm $\cA$ satisfying \cref{eq:yoav harder}, we construct the following wrapper.
 Initialize $\cA$ with $\Pi_2$ augmented with a total of $k$ copies of $\pi_0$.
 At any time $t$ receive $p_t\in\Delta(\Pi_2)$ (collapsing the copies of $\pi_0$). With probability $\gamma=3T^{-1/2+\alpha/2}$, play one of the 3 actions uniformly at random (we do this by following $\pi_0$ with probability $\gamma/3$ and following $\pi_1$ with probability $2\gamma/3$, ensuring a proper algorithm), otherwise sample $\pi_t\sim p_t$ and play $A_t=\pi_t(x_t)$. Observe $\ell_{t,A_t}$ and construct loss estimator 
 \begin{align*}
     \hat\ell_t = \frac{\sum_{\pi\in\Pi_2:\pi(x_t)=A_t}\ell_{t,A_t}\mathbf{e}_{\pi}}{\gamma/3+(1-\gamma)\sum_{\pi\in\Pi_2:\pi(x_t)=A_t}p_{t,\pi}} \,.
 \end{align*}
  Finally feed $\hat\ell_t\cdot\gamma/3$ to $\cA$.
 By construction, the wrapper is \emph{proper} and the loss range for the base algorithm is bounded in $[0,1]$.
 The variance terms encountered by $\cA$ are
 \begin{align*}
     \bbV_{\pi\sim p_t}[\frac{\hat\ell_{t,\pi}\gamma}{3}]&\leq \frac{\gamma^2}{9}\sum_{i=0}^k p_{t,\pi_i}\left(\frac{\bbI\{\pi_i(x_t)=A_t\}\ell_{t,A_t}}{\gamma/3+(1-\gamma)\sum_{j:\pi_i(x_t)=\pi_j(x_t)} p_{t,\pi_j}}\right)^2\\
     &\leq \frac{\gamma^2}{9(1-\gamma)(\gamma/3+(1-\gamma)\sum_{j:\pi_j(x_t)=A_t}p_{t,\pi_j})}\,,
 \end{align*}
 and in expectation over the choice of $A_t$, we have
 \begin{align*}
     \E[ \bbV_{\pi\sim p_i}[\frac{\hat\ell_{t,\pi}\gamma}{3}]] \leq \frac{\gamma^2}{9(1-\gamma)}K \leq \frac{2}{3}\gamma^2\,.
 \end{align*}
 The regret in environment $\cE_0$ satisfies due to unbiasedness for selecting the top $1/2$ of all policies (which are the copies of $\pi_0$)
 \begin{align*}
     \Reg(T,\Pi_1) &= \E\left[\sum_{t=1}^T\ell_{t,A_t}-\hat\ell_{t,\pi_0(x_t)}\right]\leq \gamma T + \frac{3(1-\gamma)}{\gamma}\E\left[\sum_{t=1}^T\ip{ p_t-u_{\frac{1}{2}},\gamma\hat\ell_t/3}\right]\\
     &=\gamma T+\frac{3(1-\gamma)}{\gamma}\E\left[\tilde\cO\left(\sqrt{\sum_{t=1}^T\left(\bbV_{\pi\sim p_t}[\gamma\hat\ell_{t,\pi}/3]+T^{\alpha}\right)\ln(2)}\right)\right]
     =\tilde\cO\left(T^{(1+\alpha)/2}\right)\,,
 \end{align*}
 where the last step is by Jensen's inequality.
 Analogously, for any environment $\cE_{i},i\in[k]$, we have for $\epsilon = 1/(2k)$
 \begin{align*}
     \Reg(T,\Pi_2) &
     =\tilde\cO\left(T^{(1+\alpha)/2}+\sqrt{T\ln(|\Pi_2|)}\right)\,.
 \end{align*}
 Following the same argument as in the proof of \cref{cor:main}, an upper bound of $\tilde\cO(T^{(1+\alpha)/2})$ implies $\C = \tilde\Theta(T^{\alpha/2})$, hence by the proof of \cref{thm:stochastic simple}, we have
 \begin{align*}
     \Reg(T,\Pi_2) = \tilde\Omega\left(\frac{\ln(|\Pi_2|)}{\C}\sqrt{T}\right)=\tilde\Omega\left(\ln(|\Pi_2|)T^{(1-\alpha)/2}\right)\,.
 \end{align*}
 This leads to a contradiction for $\ln(|\Pi_2|)=\Omega(T^{(1+\alpha)/2})$.
\end{proof}

\end{document}